\def\eqref#1{equation~\ref{#1}}
\def\1{\bm{1}}
\def\vk{{\bm{k}}}
\def\vo{{\bm{o}}}
\def\vq{{\bm{q}}}
\def\vs{{\bm{s}}}
\def\vv{{\bm{v}}}
\def\vw{{\bm{w}}}
\def\vx{{\bm{x}}}
\def\vz{{\bm{z}}}
\def\mOmega{{\bm{\Omega}}}
\def\momega{{\bm{\omega}}}
\def\mA{{\bm{A}}}
\def\mB{{\bm{B}}}
\def\mC{{\bm{C}}}
\def\mI{{\bm{I}}}
\def\mR{{\bm{R}}}
\def\mS{{\bm{S}}}
\def\mW{{\bm{W}}}
\def\mZ{{\bm{Z}}}
\def\mLambda{{\bm{\Lambda}}}
\DeclareMathAlphabet{\mathsfit}{\encodingdefault}{\sfdefault}{m}{sl}
\SetMathAlphabet{\mathsfit}{bold}{\encodingdefault}{\sfdefault}{bx}{n}
\def\gK{{\mathcal{K}}}
\def\gQ{{\mathcal{Q}}}
\newcommand{\E}{\mathbb{E}}
\newcommand{\R}{\mathbb{R}}
\newsavebox{\madtablebox}
\definecolor{DarkGreen}{RGB}{0,100,0}
\newcommand{\method}{\textit{Selective RoPE}\xspace}
\newcommand{\qed}{\hfill \ensuremath{\blacksquare}}
\definecolor{oursbg}{HTML}{B9E6FF} 
\newcolumntype{M}[1]{>{\centering\arraybackslash}m{#1}}  
\newcolumntype{R}[1]{>{\RaggedRight\arraybackslash}m{#1}}
\newtheorem{theorem}{Theorem}
\newtheorem{proof}{Proof}
\definecolor{darkgreen}{RGB}{0,100,0}
\definecolor{lightorange}{RGB}{255,230,180} 
\definecolor{lightred}{RGB}{255,180,180}
\definecolor{lightblue}{RGB}{180,200,255}
\newcommand*\colourcheck[1]{%
  \expandafter\newcommand\csname #1check\endcsname{\textcolor{#1}{\ding{52}}}%
}
\newcommand*\colourxmark[2]{%
  \expandafter\newcommand\csname #2check\endcsname{\textcolor{#2}{\ding{20}}}%
}
\newtcolorbox{greenbox}[1][]{colback=green!4, colframe=black, boxrule=0.3pt, boxsep=-4pt #1}
\newtcolorbox{whitebox}[1][]{colback=white!4, colframe=black, boxrule=0.3pt, boxsep=0.1pt, left=0.1pt, right=2pt, top=0pt, bottom=0pt, #1}
\newtcolorbox[auto counter, number within=section]{gbox}[2][]{colback=green!10!white, colframe=green!80!black, boxrule=0.8mm, arc=4mm, width=0.2mm,  #1}
\definecolor{light}{RGB}{240, 240, 240} 
\newtcolorbox{lightgraybox}{
    enhanced,
    frame hidden,
    sharp corners,
    colback=light,
    colframe=light,     
    boxrule=0pt,        
    boxsep=4pt,
    top=1pt,
    bottom=1pt,
    left=2pt,
    right=2pt,
}
\newtcolorbox{floatinglightgraybox}{
    enhanced,
    frame hidden,
    sharp corners,
    colback=light,
    borderline={3pt}{-3pt}{light},
    float,
    floatplacement=t,
}
\title{Selective Rotary Position Embedding}
\author{Sajad Movahedi$^{\ast 1,4}$,~~Timur Carstensen$^{\ast 1,3}$,~~Arshia Afzal$^{\ast 2}$, \\ \textbf{Frank Hutter$^{1,3,5}$,~~Antonio Orvieto$^{\dagger1,4}$, Volkan Cevher$^{\dagger2}$} \\
Equal contribution$^\ast$, Equal supervision$^\dagger$\\ ELLIS Institute T{\"u}bingen$^{1}$, LIONS, EPFL$^{2}$, University of Freiburg$^{3}$,\\Max-Planck-Institute for Intelligent Systems$^{4}$, Prior Labs$^{5}$\\
{\small \texttt{sajad.movahedi@tue.ellis.eu}}
\quad
{\small \texttt{timurcarstensen@gmail.com}}
\quad \\
{\small \texttt{arshia.afzal@epfl.ch}}}
\begin{document}

\maketitle
\lhead{Published as a conference paper at ICLR 2026}

\begin{abstract}
    Positional information is essential for language modeling. Softmax Transformers with Rotary Position Embeddings (RoPE) encode it with fixed-angle rotations, while linear Transformers rely on input-dependent gates that only decay past key-value norms. We provide a theoretical argument for the necessity of a rotation and decay component in well-performing sequence models, and observe that the missing ingredient in linear models is precisely the rotation that softmax attention performs implicitly.
    We introduce Selective Rotary Position Embedding (\method), an input-dependent, learnable rotary embedding that generalizes RoPE to arbitrary angles and composes seamlessly with decay gates.
    Equipping gated linear attention with \method yields a complex-valued recurrent layer that can be implemented efficiently with the “RoPE trick”. On synthetic benchmarks (MQAR, copying, state tracking) and 370M-parameter language-model pre-training, the method improves recall, downstream accuracy, and expressivity while adding minimal architectural overhead. We open-source our implementation \href{https://github.com/timurcarstensen/selective-rope}{here}.
\end{abstract}

\vspace{-5mm}
\section{Introduction}
\vspace{-3mm}
Transformers with softmax attention~\citep{vaswani-neurips17a} form the foundation of state-of-the-art language models, largely because every token can attend to all past tokens without decay, yielding strong in-context recall. This expressivity comes at a computational cost: even with memory-efficient kernels, the arithmetic complexity of softmax attention is quadratic in the sequence length. To mitigate this, a parallel line of work has developed sub-quadratic sequence models. These models are recurrent architectures that run in \emph{linear} time and require only \emph{constant} memory per step at inference~\citep{katharopoulos-icml20a,yang-neurips24a,gu-arxiv23a,dao-icml24a}. Their main bottleneck is the fixed state size: information must be selectively retained or overwritten, which often harms long-horizon retrieval. Consequently, recent progress has focused on improving state management in these models.

Theoretical investigations have proven the benefits of complex gating mechanisms for this purpose~\citep{orvieto2023universality, ranmilo-neurips24a, francois-colt25a}. 
However, in practice selective gating~\citep{yang-icml24a,gu-arxiv23a,dao-icml24a}, more expressive state updates~\citep{yang-neurips24a,siems-arxiv25a,peng-arxiv25a} and readouts~\citep{peng-arxiv25a,hu-arxiv25a} usually play the main role in memory management.
As these mechanisms largely act by modulating the \emph{norms} of key-value associations (i.e., how quickly they decay) and do not directly provide the complementary capability of \emph{rotating} query-key representations to encode relative position, they fail to materialize the full benefits of the gating mechanism as done in earlier variants of recurrent sequence models~\citep{gu2021efficiently, orvieto-icml23a, retnet}.

\vspace{-3mm}
\paragraph{Our view: recall needs rotation \emph{and} decay.}
We posit that strong recall requires two complementary mechanisms:
(i) \emph{rotation}, to encode relative position while preserving norms, and
(ii) \emph{decay}, to selectively discard past key–value associations.
By relying on the relationship between the softmax function and Random Fourier Features (RFF), we reveal that softmax attention can be interpreted as performing \emph{input-dependent selective rotations} of query-key pairs, thus motivating the use of RoPE-like rotation matrices in recurrent models.

\vspace{-3mm}
\paragraph{Why rotation alone is insufficient.}
A purely complex (rotation-only) linear recurrent model behaves like a spectral analyzer with fixed state size. When applied to the input sequence, which can be seen as a finite-length signal, such a model suffers from spectral leakage, leading to a poorer approximation of the underlying signal. This can be mitigated by adding an exponentially decaying component to the recurrence. In linear recurrent architectures, the analog of spectral leakage is the suboptimal compression of key–value associations into the fixed-size hidden state, which is remedied by adding \emph{selective gating} to the state transition.

Based on our recipe, we introduce a complex version of Gated Linear Attention (GLA)~\citep{yang-icml24a} and demonstrate its superior performance and expressivity. In practice, we show that by using the RoPE trick~\citep{su-arxiv21a}, we are able to efficiently compute a complex GLA by applying a learned, input-dependent rotary position embedding to the queries and keys. \textbf{\method is easily incorporated into most gated linear Transformers by applying input-dependent and learned rotations to queries and keys.  } In~\Cref{fig:three-way-tikz}, we provide a visual comparison.

\textbf{Contributions.}
\vspace{-0.4em}
\begin{itemize}[leftmargin=2.5em, itemsep=2pt, topsep=2pt]
\item \textbf{Unifying view.} We show that effective recall needs both \emph{rotation} and \emph{decay}. Softmax implicitly implements input-dependent rotations. Complex-only linear models suffer from spectral leakage, motivating explicit decay. Real parts forget; imaginary parts encode position.
\item \textbf{Theory.} (1) Using an RFF approximation of the exponential kernel, we expose selective rotations in softmax and derive an optimal temperature distribution that matches exponentially decaying frequencies used in RoPE. (2) We investigate diagonal SSMs as spectral analysers, showing that they suffer from spectral leakage which can be suppressed through forget gates.
\item \textbf{Method:} We propose \textbf{\method}. An input-dependent rotary embedding that generalizes RoPE to learned angles and composes with gates; we make the implementation efficient by incorporating the RoPE trick in both linear and softmax attention.
\item \textbf{Empirics.} We integrate \method with GLA, significantly boosting performance on recall-centric synthetic tasks (MQAR, copying, state tracking) and improving downstream language modeling.
\end{itemize}

\begin{figure}[t]
\centering

\def\R{0.8}
\def\Circlewidth{1.0pt}
\def\Bracewidth{0.5pt}

\def\OrigMix{35}     
\def\ArcMix{70}      
\def\CircleMix{35}   

\def\BraceOpacity{0.95}  
\def\DashOpacity{1.00}   

\begin{subfigure}[t]{0.32\textwidth}
\centering
\resizebox{\linewidth}{!}{
\begin{tikzpicture}[line cap=round, line join=round, >=Latex, scale=3.0]

\def\angQ{120}        
\def\angK{-10}        
\def\gQ{0.80}
\def\gK{0.70}

\coordinate (O) at (0,0);

\colorlet{qC}{blue!85!black}
\colorlet{kC}{red!85!black}

\colorlet{qCorig}{qC!\OrigMix!white}
\colorlet{kCorig}{kC!\OrigMix!white}
\colorlet{qCarc}{qC!\ArcMix!white}
\colorlet{kCarc}{kC!\ArcMix!white}
\colorlet{circleC}{gray!\CircleMix!white}

\tikzset{
  vec/.style={very thick, -{Latex[length=3.5mm,width=2.5mm]}},
  scaledvec/.style={
    vec, dashed, opacity=\DashOpacity,
    preaction={draw=white, line width=5pt, line cap=round, solid, opacity=1, -}
  },
  lbl/.style={font=\Large},
  lblS/.style={lbl, fill=white, fill opacity=0.9, text opacity=1, opacity=1, inner sep=1.2pt},
  noteT/.style={font=\large, inner sep=1.2pt, align=left, fill=none, draw=none, text opacity=1, opacity=1},
  thickbrace/.style={decorate, decoration={brace}, line width=\Bracewidth},
}

\draw[circleC, line width=\Circlewidth] (O) circle (\R);

\draw[qCorig, vec] (O) -- (\angQ:\R)
  node[pos=1, anchor=south east, lblS] {$q_t$};

\draw[kCorig, vec] (O) -- (\angK:\R)
  node[pos=1.03, anchor=west, lblS] {$k_\tau$};

\draw[qC, scaledvec] (O) -- (\angQ:{\gQ*\R})
  node[pos=1, anchor=west, xshift=-14pt, yshift=-9pt, lblS] {$\tilde q_t$};

\draw[kC, scaledvec] (O) -- (\angK:{\gK*\R})
  node[pos=1, anchor=west, xshift=-13pt, yshift=14pt, lblS] {$\tilde k_\tau$};

\def\braceInner{0.08}
\def\braceOuterQ{0.55}
\def\braceOuterK{0.50}

\coordinate (qB1) at (\angQ:{\braceOuterQ*\R});
\coordinate (qB0) at (\angQ:{\braceInner*\R});

\draw[qC, opacity=\BraceOpacity, thickbrace, decoration={brace, amplitude=6pt, raise=22pt}]
  ([shift={(90+\angQ:6pt)}]qB1) --
  ([shift={(90+\angQ:6pt)}]qB0)
  node[
    pos=0.52, sloped, above,
    xshift=8pt, yshift=36pt, rotate=60,
    fill=none, draw=none,
    noteT, text=qC
  ] {$A_{1:t}$};

\coordinate (kB1) at (\angK:{\braceOuterK*\R});
\coordinate (kB0) at (\angK:{\braceInner*\R});

\draw[kC, opacity=\BraceOpacity, thickbrace, decoration={brace, amplitude=6pt, raise=21pt}]
  ([shift={(90+\angK:6pt)}]kB1) --
  ([shift={(90+\angK:6pt)}]kB0)
  node[
    pos=0.52, sloped, above,
    xshift=2pt, yshift=-40pt, rotate=10,
    fill=none, draw=none,
    noteT, text=kC
  ] {$A_{1:\tau}$};

\end{tikzpicture}%
}
\caption{GLA/Mamba2 gating}
\end{subfigure}
\hfill%
\begin{subfigure}[t]{0.32\textwidth}
\centering
\resizebox{\linewidth}{!}{%
\begin{tikzpicture}[line cap=round, line join=round, >=Latex, scale=3.0]

\def\angQ{155}        
\def\angK{-10}        

\def\rotQ{-65}        
\def\rotK{-50}        

\pgfmathsetmacro{\angQt}{\angQ+\rotQ}
\pgfmathsetmacro{\angKt}{\angK+\rotK}

\coordinate (O) at (0,0);

\colorlet{qC}{blue!85!black}
\colorlet{kC}{red!85!black}

\colorlet{qCorig}{qC!\OrigMix!white}
\colorlet{kCorig}{kC!\OrigMix!white}
\colorlet{qCarc}{qC!\ArcMix!white}
\colorlet{kCarc}{kC!\ArcMix!white}
\colorlet{circleC}{gray!\CircleMix!white}

\tikzset{
  vec/.style={very thick, -{Latex[length=3.5mm,width=2.5mm]}},
  rotvec/.style={vec, dashed, opacity=\DashOpacity},
  arc/.style={->, line width=0.9pt}, 
  lbl/.style={font=\Large},
  lblS/.style={lbl, fill=white, fill opacity=0.9, text opacity=1, opacity=1, inner sep=1.2pt},
  noteT/.style={font=\large, inner sep=1.2pt, align=left, fill=none, draw=none, text opacity=1, opacity=1},
}

\draw[circleC, line width=1pt] (O) circle (\R);

\draw[qCorig, vec] (O) -- (\angQ:\R)
  node[pos=1.04, anchor=south east, yshift=-7pt, lblS] {$q_t$};

\draw[kCorig, vec] (O) -- (\angK:\R)
  node[pos=1.04, anchor=west, lblS] {$k_\tau$};

\draw[qC, rotvec] (O) -- (\angQt:\R)
  node[pos=1.02, anchor=south west, xshift=-10pt, lblS] {$\tilde q_t$};

\draw[kC, rotvec] (O) -- (\angKt:\R)
  node[pos=1.16, fill=none, draw=none, anchor=west, xshift=-5pt, lblS] {$\tilde k_\tau$};

\draw[qCarc, arc]
  ($(O)+(\angQ:0.62*\R)$) arc[start angle=\angQ, end angle=\angQt, radius=0.62*\R]
  node[midway, noteT, text=qC, left, xshift=3pt, yshift=10pt] {$R_\omega^t$};

\draw[kCarc, arc]
  ($(O)+(\angK:0.52*\R)$) arc[start angle=\angK, end angle=\angKt, radius=0.52*\R]
  node[midway, noteT, text=kC, below, yshift=4pt, xshift=10pt] {$R_\omega^\tau$};

\end{tikzpicture}%
}
\caption{RoPE}
\end{subfigure}
\hfill%
\begin{subfigure}[t]{0.32\textwidth}
\centering
\resizebox{\linewidth}{!}{%
\begin{tikzpicture}[line cap=round, line join=round, >=Latex, scale=3.0]

\def\angQ{145}        
\def\angQt{50}        

\def\angK{-20}        
\def\angKt{-110}      

\def\gQ{0.80}
\def\gK{0.70}

\coordinate (O) at (0,0);

\colorlet{qC}{blue!85!black}
\colorlet{kC}{red!85!black}

\colorlet{qCorig}{qC!\OrigMix!white}
\colorlet{kCorig}{kC!\OrigMix!white}
\colorlet{qCarc}{qC!\ArcMix!white}
\colorlet{kCarc}{kC!\ArcMix!white}
\colorlet{circleC}{gray!\CircleMix!white}

\tikzset{
  vec/.style={very thick, -{Latex[length=3.5mm,width=2.5mm]}},
  rotvec/.style={vec, dashed, opacity=\DashOpacity},
  arc/.style={->, line width=0.9pt}, 
  lbl/.style={font=\Large},
  lblS/.style={lbl, fill=white, fill opacity=0.9, text opacity=1, opacity=1, inner sep=1.2pt},
  noteT/.style={font=\large, inner sep=1.2pt, align=left, fill=none, draw=none, text opacity=1, opacity=1},
  thickbrace/.style={decorate, decoration={brace}, line width=\Bracewidth},
}

\draw[circleC, line width=\Circlewidth] (O) circle (\R);

\draw[qCorig, vec] (O) -- (\angQ:\R)
  node[pos=1, anchor=south east, lblS] {$q_t$};

\draw[kCorig, vec] (O) -- (\angK:\R)
  node[pos=1.04, anchor=west, lblS] {$k_\tau$};

\draw[qC, rotvec] (O) -- (\angQt:{\gQ*\R})
  node[pos=0.88, anchor=south west, xshift=5pt, yshift=-6pt, lblS] {$\tilde q_t$};

\draw[kC, rotvec] (O) -- (\angKt:{\gK*\R})
  node[pos=0.97, fill=none, draw=none, anchor=west, xshift=-9pt, yshift=-10pt, lblS] {$\tilde k_\tau$};

\def\braceInner{0.08}
\def\braceOuterK{0.50}
\def\braceOuterQ{0.60}

\coordinate (kB1) at (\angKt:{\braceOuterK*\R});
\coordinate (kB0) at (\angKt:{\braceInner*\R});

\draw[kC, opacity=\BraceOpacity, thickbrace, decoration={brace, amplitude=6pt, raise=22pt}]
  ([shift={(90+\angKt:6pt)}]kB1) --
  ([shift={(90+\angKt:6pt)}]kB0)
  node[
    pos=0.53, sloped, above,
    xshift=-6pt, yshift=38pt, rotate=-70,
    fill=none, draw=none,
    noteT, text=kC
  ] {$A_{1:\tau}$};

\coordinate (qB1) at (\angQt:{\braceOuterQ*\R});
\coordinate (qB0) at (\angQt:{\braceInner*\R});

\draw[qC, opacity=\BraceOpacity, thickbrace, decoration={brace, amplitude=6pt, raise=22pt}]
  ([shift={(90+\angQt:6pt)}]qB1) --
  ([shift={(90+\angQt:6pt)}]qB0)
  node[
    pos=0.53, sloped, above,
    xshift=1pt, yshift=-42pt, rotate=-50,
    fill=none, draw=none,
    noteT, text=qC
  ] {$A_{1:t}$};

\draw[qCarc, arc]
  ($(O)+(\angQ:0.62*\R)$) arc[start angle=\angQ, end angle=\angQt, radius=0.62*\R]
  node[midway, noteT, text=qC, above, yshift=2pt] {$R_{1:t}$};

\draw[kCarc, arc]
  ($(O)+(\angK:0.52*\R)$) arc[start angle=\angK, end angle=\angKt, radius=0.52*\R]
  node[midway, noteT, text=kC, below, yshift=-2pt, xshift=8pt] {$R_{1:\tau}$};

\end{tikzpicture}%
}
\caption{Selective RoPE}
\end{subfigure}

\caption{\textbf{Scaling vs.\ rotation on the unit circle.} \emph{Left:} GLA/Mamba-style forgetting encodes history primarily through \emph{norm scaling}: cumulative gates $A_{1:t}$ and $A_{1:\tau}$ attenuate $q_t$ and $k_\tau$ to $\tilde q_t$ and $\tilde k_\tau$ without changing direction. \emph{Middle:} RoPE encodes position via \emph{fixed} index-dependent rotations ($R_\omega^t$, $R_\omega^\tau$) that preserve norms. \emph{Right:} Selective RoPE + decay composes \emph{input-dependent rotations} with \emph{norm scaling}, capturing both phase (relative position) and forgetting.}

\label{fig:three-way-tikz}
\end{figure}

\vspace{-10pt}
\section{Background}\label{sec:background}
\vspace{-10pt}
\par In this section, we provide some mathematical background, beginning with an introduction of the Transformer architecture, its relevant variants and the RoPE trick~\citep{su-arxiv21a}. The results of this section also establish a relationship between a linear Transformer architecture with RoPE and a recurrent sequence model with rotations in the state transition. This relationship later aid us through a more efficient computation of the latter.

\vspace{-2mm}
\paragraph{Transformers.} Standard causal softmax attention \citep{vaswani-neurips17a} transforms a sequence of $L$ inputs $(\vx_t)_{t=1}^L$ into the sequence of outputs $(\vo_t)_{t=1}^L$, with $\vx_t, \vs_t, \vo_t \in \R^d$ and $z_t\in \mathbb{R}$:
\begin{equation}
    \vo_t = \frac{\vs_t}{z_t},\quad  \vs_t = \sum_{\tau=1}^t \exp\!\left(\tfrac{1}{\sqrt{d}}\vq_t^\top \vk_\tau\right)\cdot \vv_\tau,\quad z_t = \sum_{\tau=1}^t \exp\!\left(\tfrac{1}{\sqrt{d}}\vq_t^\top \vk_\tau\right),
    \label{eq:softmax_attention}
\end{equation}
where $\vq_t,\vk_t,\vv_t  = \mW_q\vx_t,\mW_k\vx_t,\mW_v\vx_t$, and $ \mW_q,\mW_k,\mW_v \in \R^{d\times d}$ are the projection matrices and $z_t$ is the normalization factor. Linear attention \citep{katharopoulos-icml20a} replaces the exponential kernel in softmax attention with a kernel with a positive feature map $\phi(\cdot): \R^d \rightarrow (\R^+)^{d}$, which gives rise to the following model:
\begin{equation} \label{eq:Linear_attention}
\vo_t = \frac{\mS_t\phi(\vq_t)}{\vz_t^\top\phi(\vq_t)}, \quad
\mS_t = \sum_{\tau=1}^t \vv_\tau \phi(\vk_\tau)^\top, \quad
\vz_t = \sum_{\tau=1}^t \phi(\vk_\tau).
\end{equation}
Here $\mS_t \in \R^{d\times d}$ and $\vz_t\in\R^{d}$ are the state and the normalization factor. Due to the linear relationship, one can write the hidden state and the normalization factor
in a recurrent form as: $\mS_t = \mS_{t-1} + \vv_t\phi(\vk_t)^\top$ and $\vz_t = \vz_{t-1} + \phi(\vk_t)$. Moving forward, we subsume the feature map $\phi(\cdot)$ into query-key vectors to simplify notation and drop the normalization factor $\vz_t$ following \citet{sun-arxiv23a}. \Cref{eq:Linear_attention} was enhanced with a \textit{forget gate}, $\mA_t$,  to manage the finite-sized hidden state better when processing long sequences:
\begin{equation}
\mS_t = \mS_{t-1}\mA_t + \vv_t\vk_t^\top, \quad \vo_t = \mS_t\vq_t=\sum_{\tau=1}^{t}\vv_{\tau}\Bigl\{\underbrace{\vk_{\tau}^{\top}\left(\prod_{\kappa=\tau+1}^{t}\mA_{\kappa}\right)\vq_{t}}_{\text{Att}_{t,\tau}}\Bigr\},
\label{eq:linatt_forget}
\end{equation}
which is either diagonal~\citep{yang-icml24a,gu-arxiv23a} or scalar-valued~\citep{dao-icml24a}. In both, the channels of the hidden state evolve independently. In~\Cref{eq:linatt_forget}, $\text{Att}_{t,\tau}$ denotes the attention score between $\vq_t$ and $\vk_{\tau}$, while the factor $\prod_{\kappa=\tau+1}^t\mA_{\kappa}$ attenuates this inner product according to the cumulative gate between positions $\tau$ and $t$. This term can therefore be interpreted as a position encoding~\citep{yang-arxiv25a}, since it depends on the relative distance between $t$ and $\tau$. For~\Cref{eq:linatt_forget} to be stable, the gates must be contractive, which is typically enforced by bounding the spectral norm of $\mA_t$~\citep{gu-arxiv23a}. More recently, the simple forget gate has been generalized to more expressive \emph{state transition} matrices that also mix channels across time. These are often parameterized in diagonal-plus-low-rank (DPLR) form~\citep{yang-iclr25a,peng-arxiv25a}, which admits a memory-efficient representation of products of such matrices.

\paragraph{\textit{RoPE} and Complex Linear Attention.} Rotary Position Embeddings (\textit{RoPE}) are used to add relative positional information through rotations of the query-key pairs \citep{su-arxiv21a}. For queries and keys $\vq_t, \vk_\tau \in \mathbb{R}^{2}$, \textit{RoPE} applies relative positional encoding using the rotation matrix $\mR_\omega$:
\begin{equation}
     \text{Att}_{t,\tau}=\exp\!\left(\vk_\tau^\top \mR_\omega^{\,t-\tau} \vq_t\right)
    = \exp\!\left((\mR_\omega^{\tau} \vk_\tau)^\top(\mR_\omega^t \vq_t) \right),
    \quad
    \mR_\omega =
    \begin{bmatrix}
        \cos\omega & -\sin\omega \\
        \sin\omega & \cos\omega
    \end{bmatrix},
\end{equation}
with $\omega$ being the frequency of rotation. The query at time $t$ and key at time $\tau$ are rotated via $(\mR_\omega)^t = \mR_{t\omega}$. For $d$-dimensional queries and keys, $\vq_t,\vk_\tau$ are split into $d/2$ vectors $\in\mathbb{R}^2$, each rotated independently with their own frequency. This yields a block-diagonal rotation matrix $\mR \in \R^{d\times d}$ where each $2\times 2$ real matrix $\mR_{\omega_k}$ is  parameterized by a frequency $\omega_k$.

The \textbf{RoPE trick} allows expressing a \textit{complex parametrization of a linear Transformer} in the real domain. Consider the real part of the following complex attention score:
\begin{equation}
    \text{Att}_{t,\tau}=\Re\{\tilde{\vk}_{\tau}^{\mathsf{H}}
    \operatorname{diag}\underbrace{\left(
        \begin{bmatrix}
            e^{i\omega_{1}(t-\tau)} & \hspace{-4pt}\cdots \hspace{-4pt}& e^{i\omega_{n}(t-\tau)}
        \end{bmatrix}
    \right)}_{\bar{\mR} \, \in \, \mathbb{C}^{d/2 \times d/2}}
    \tilde{\vq}_{t}\},~~ \text{with}~~ \tilde{\vq_t},\tilde{\vk_\tau}\in\mathbb{C}^{d/2},
\end{equation}
where $\bar{\mR}$ is a unitary diagonal state transition matrix. This expression can be understood as applying RoPE to queries and keys  $\vq_t,\vk_{\tau}$ in $\mathbb{R}^d$, where we interleave the real and imaginary part in the odd and even indices of queries and keys:
\begin{equation}
    \text{Att}_{t,\tau}=\sum_{n=1}^{d/2}
    \begin{bmatrix}
        \vk_{\tau,2n-1} \\
        \vk_{\tau,2n}
    \end{bmatrix}^\top
    \underbrace{\begin{bmatrix}
        \cos\omega_n(t-\tau) & -\sin\omega_n(t-\tau)  \\
        \sin\omega_n(t-\tau) & \cos\omega_n(t-\tau)  \\
    \end{bmatrix}}_{\mR_{\omega_n}^{t-\tau}}
    \begin{bmatrix}
        \vq_{t,2n-1} \\
        \vq_{t,2n}
    \end{bmatrix}.
    \vspace{-3pt}
    \label{eq:rope-realified-att-score}
\end{equation}
When we unroll the recurrence in \Cref{eq:linatt_forget} and replace the forget gate, $\mA_\kappa$, with the block-diagonal rotation matrix $\mR\in\mathbb{R}^{d \times d}$ in RoPE, we get:
\begin{equation}
    \vspace{-3pt}
    \vo_t=\sum_{\tau=1}^{t}\vv_{\tau}\Bigl\{\vk_{\tau}^{\top}\mR^{t-\tau}\vq_t\Bigr\}\qquad\text{with}\hspace{6pt} \mR^{t-\tau}\!=\operatorname{blockdiag}\ \left( \begin{bmatrix}
        \mR_{\omega_1}^{t-\tau}\! & \cdots \! &\mR_{\omega_n}^{t-\tau}
    \end{bmatrix}
    \right)
    \vspace{-3pt}
    \label{eqn:rope-trick}
\end{equation}
Note that due to the block-diagonal structure of $\mR$, we can write $\mR^{t-\tau}=\left(\mR^\tau\right)^{\top}\mR^t$, from which follows that $\vk_\tau^\top \mR^{t-\tau}\vq_t=\left(\mR^\tau \vk_\tau\right)^{\top}\mR^t \vq_t$. This allows us to express the rotation matrix as applying RoPE to queries and keys, similar to \Cref{eq:rope-realified-att-score}.

In summary, a linear Transformer with RoPE is\textit{ equivalent to the same model with a unitary, diagonal and non-selective transition in half the dimensions}. The \textit{RoPE trick} allows us to implement this complex parameterization by applying RoPE to queries and keys, effectively staying in the real domain which, allows us to re-use existing (linear) attention kernels. A full derivation is shown in~\Cref{app:rope-imaginary-transformer}.

\vspace{-2mm}
\section{A unifying view: Decay and Rotation}\label{sec:method}
\vspace{-5pt}
In this section, we motivate our method, \method, by first observing that pure softmax attention implicitly performs selective rotations when viewed through the lens of Random Fourier Features (RFFs) (\Cref{sec:softmax-rff}). These rotations are missing in linear attention. In~\Cref{sec:spectral-leakage}, we explain why rotations do not suffice and why selective gating is necessary, building on the complementary roles that real (gating) and imaginary (rotation) parts play in diagonal SSMs. Finally, in~\Cref{sec:design-principles} we combine the previous insights and present our proposed method.

\vspace{-5pt}
\subsection{Softmax attention implicitly performs rotations}\label{sec:softmax-rff}
\vspace{-5pt}
We begin with the connection between RFFs and softmax attention, and illustrate that rotation is an integral component in softmax attention. Specifically, we start from the definition of the softmax attention in \Cref{eq:softmax_attention} (omitting temperature for simplicity). Following~\citet{peng-iclr21a} and \citet[Theorem 1]{rahimi-nips07a}, we define the RFF kernel as $\phi_\momega(\vx) = \exp\!\left(\Vert \vx  \Vert_2^2/2+ i \momega^\top \vx\right)$ with $\momega\!\sim\!\mathcal{N}(0,\mI)$. When applying the kernel to the dot-product of queries and keys $\langle\vq_t,\vk_\tau\rangle$, whose expected real component is equivalent to the attention score $\mathrm{Att}_{t,\tau}$, we obtain:
\begin{equation}
    \vspace{-2pt}
    \Re\left\{\mathbb{E}_{\momega\sim\mathcal{N}\left(0, \mI\right)}\left[\phi_\momega(\vq_t)^\top\phi_\momega(\vk_\tau)\right]\right\}=\exp\!\left(\vq_t^\top\vk_\tau\right).
    \vspace{-2pt}
    \label{eqn:rff-equivalency}
\end{equation}
By the law of large numbers, with i.i.d. $\momega_j\!\sim\! \mathcal{N}\left(0, \sigma^2 \mI\right)$ for $j\!\in\!\{1,\!\cdots\!, D\}$ and $\sigma=1$ we can approximate the output of softmax attention before normalizing the attention scores as:
\begin{equation*}
    \vspace{-2pt}
    \vs_t=\Re\left\{\mathbb{E}\left[\sum_{\tau=1}^t \phi_\momega(\vq_t)^{\mathsf H}\phi_\momega(\vk_\tau)\cdot \vv_\tau\right]\right\}=\lim_{D\to \infty}\Re\left\{\frac{1}{D}\sum_{j=1}^D \hat{\vs}_{t, j}\right\}
    \vspace{-2pt}
\end{equation*}
where $\hat{\vs}_{t, j}=\sum_{\tau=1}^t \phi_{\momega_j}(\vq_t)^{{\mathsf H}}\phi_{\momega_j}(\vk_\tau)\cdot\vv_\tau$ is the $j$-th contribution to the attention score $\mathrm{Att}_{t,\tau}$. With some algebraic manipulations and mild assumptions (full derivation in \Cref{app:softmax-rff-proof}) and using the definition of $\phi_{\momega{_j}}$, we can re-express $\hat{\vs}_j$ as a recurrence. Stacking $D$ of these recurrences horizontally gives us a matrix-valued recurrence over $\hat{\mS}_t\in\mathbb{R}^{d\times D}$:
\begin{equation}
    \vspace{-2pt}
    \hat{\mS}_t=  \hat{\mS}_{t-1}\bar\mR_t + \vv_t \tilde{\vk}_t^{\mathsf H},\quad \bar\mR_t = \operatorname{diag}\Bigl(\exp\big(i\mOmega(\vq_t-\vq_{t-1})\big)\Bigr),\quad \tilde{\vk}_t = \phi(\vq_t)\odot \phi(\vk_\tau),
    \vspace{-2pt}
    \label{eqn:complex-rff-rnn}
\end{equation}
Crucially, $\bar{\mR}_t$ is a diagonal \textit{input-dependent rotation matrix} parametrized by random Gaussian features $\mOmega$, conditioned on the input via $\vq_t-\vq_{t-1}$. Furthermore, due to the corresponding telescoping product, the induced rotation is path independent. Consequently, the rotation introduced by the softmax attention itself lacks an explicit positional inductive bias or encoding. Therefore, the relevance of this expression is structural, allowing us to introduce an efficient way to compute a general form of $\bar{\mR}_t$, the significance of which will become clear.

Recalling the RoPE trick in~\Cref{sec:background}, it should become clear that we can re-express $\bar{\mR}_t$ as a block-diagonal matrix where each $2\times2$ rotation matrix on its diagonal rotates by the angle $\phi_j=\langle\momega_j,(\vq_t-\vq_{t-1})\rangle$. This is a stable recurrence, since $\mR_t$ is norm preserving, which also means that it does not forget past key-value associations. Interestingly, the shift over the queries $\vq$ can be expressed by a 1d short-convolution, which is a component that is already frequently used in recurrent architectures~\citep{yang-iclr25a,dao-icml24a}. We can follow a similar derivation as in \Cref{eqn:complex-rff-rnn} for the normalizer $z_t$. The read-out proceeds slightly differently than in normal linear attention: since each column $j$ of the recurrent state represents the contribution of the $j$-th random feature to the approximation of $\vs_t$, we sum over the columns: $\hat{\mS}_t\mathbf{1}$.

\begin{wrapfigure}[14]{r}{0.3\textwidth}
    \vspace{-14pt}
    \captionsetup{font=footnotesize}
    \includegraphics[width=\linewidth, trim={3 5 3 3}, clip]{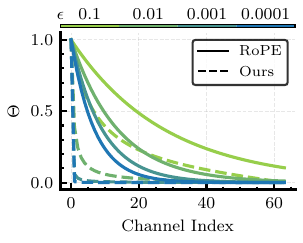}
    \vspace{-17pt}
    \caption{Distribution of phase temperatures in RoPE vs. \method. $\epsilon$ is the inverse of the RoPE base frequency and the upper-bound of query-key angle in our temperature (details in App.~\ref{app:temp-param}).}
    \label{fig:temperatures}
\end{wrapfigure}
The equivalence of the RFF kernel in \Cref{eqn:rff-equivalency} only holds for an unlimited number of samples, i.e., $D\to \infty$. However, as shown in~\Cref{thm:optimal-var} (\Cref{app:rff-optimal-variance}), in the presence of limited samples, one can have an optimal variance for the RFFs for a single query-key pair, proportional to the angle between the two vectors. Assuming a uniform distribution for the angle between the query-key pairs, we obtain a spectrum of optimal variances.

Extending this perspective, we define the rotation matrix as $\hat{\mR}_t=\exp\!\left(i\mOmega \Theta(\vq_t-\vq_{t-1})\right)$, where $\Theta$ is a diagonal matrix of temperatures. Assuming the angle between the queries and keys are uniformly distributed in $[0,2\pi]$, the optimal temperatures follow $\tan^2(\tfrac{\theta}{2})$ with $\theta \sim \mathcal{U}[0,2\pi]$ (ref. \Cref{thm:optimal-var}). Interestingly, this distribution closely resembles the exponentially decaying frequencies used in \textit{RoPE}, with a slightly faster decline, as shown in~\Cref{fig:temperatures}.

In summary, we have shown that softmax attention implicitly performs random input-dependent rotations to encode relative positional information between tokens. A comparison to other kernelized linear-attention variants (e.g., Performer, RoFormer, and CosFormer) is provided in~\Cref{app:kernelized-fixed}.

\subsection{Necessity of Gating: Spectral Leakage in Diagonal SSMs}\label{sec:spectral-leakage}
\vspace{-5pt}

\begin{figure}[t]
    \centering
    \vspace{-5pt}
    \includegraphics[width=1.0\linewidth, trim={3 3 3 0}, clip]{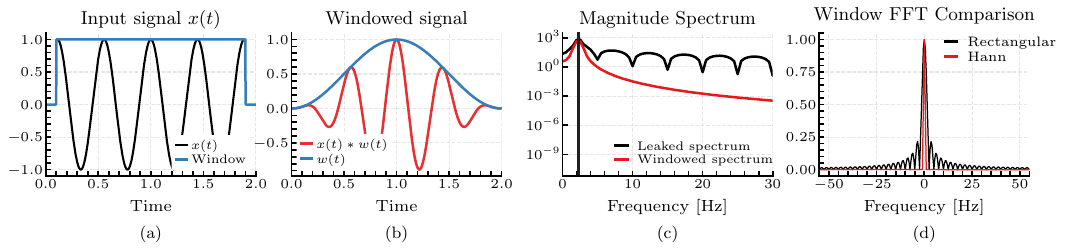}
    \vspace{-18pt}
    \caption{The effects of windowing on the spectrogram of a finite sample of a sequence.}
    \label{fig:spectral-leakage}
    \vspace{-12pt}
\end{figure}

By analyzing the role of real and imaginary parts in complex diagonal SSMs, in this section we show that rotations alone are not enough to close the gap between linear and softmax attention. Inspired by the findings of~\Cref{sec:softmax-rff}, we analyze a model that is related to GLA, where the diagonal gate $\mA_t$ is instead replaced by the rotation matrix $\bar{\mR}_t$ introduced in \Cref{eqn:complex-rff-rnn}:
\begin{equation}
    \mS_t=\mS_{t-1}\bar{\mR}_t+\vv_t\vk_{t}^{\mathsf H}, \quad \vo_t=\Re\!\left\{\mS_t \vq_t\right\}.
    \label{eqn:rotation-gla}
\end{equation}
By unrolling the recurrence, we can write the output as:
\begin{equation*}
    \label{eqn:linear-attn-cont}
    \vo_t = \Re~\left\{{\textstyle\sum_{j=1}^{d/2}} \vq_{t, j}~ e^{i\omega_{t, j}} {\textstyle\sum^{+\infty}_{\tau=-\infty}} \vk_{\tau, j}~e^{-i\omega_{\tau, j}} \vv_\tau u_t(\tau) d\tau\right\}.
\end{equation*}
This equation contains a convolution of the value and key (i.e., the input) with an exponential function with pure imaginary exponents (i.e., $e^{-i\omega_{\tau, j}}$). This is a spectral analysis (discrete Fourier transform, DFT) of the value signal in the presence of the step-window function $u_t(\tau)$ (definition in~\Cref{app:real-imaginary-diag-ssms}), which is visualized in~\Cref{fig:spectral-leakage}a. When naively performing a DFT over a finite sample, the resulting discontinuities at the margins of the sample cause spectral leakage in the spectrogram as shown in~\Cref{fig:spectral-leakage}c. To avoid spectral leakage, one usually places a non-rectangular window which tapers off towards the margins. The convolution of the signal with a Hann window~\citep{oppenheim1999discrete}  function is shown in~\Cref{fig:spectral-leakage}b and the resulting magnitude spectrum in~\Cref{fig:spectral-leakage}c. In~\Cref{fig:spectral-leakage}d, we show that we are able to recover the correct frequency after a window FFT when applying a Hann window to our input signal. The window function chosen here acts like an exponential decay towards the margins, which is analogous to using a gate in our model in \Cref{eqn:rotation-gla}. The use of gates in sequence models has a long history. Starting from the gating mechanism in LSTMs~\citep{hochreiter-nc97a}, it is also widely used in linear attention, linear RNNs and SSMs~\citep{yang-icml24a,gu-arxiv23a}, and even softmax Transformers~\citep{lin-iclr25a}. Our results in this section provide a theoretical motivation for the use of gating mechanisms.

\vspace{-5pt}
\subsection{Design Principles for Linear Attention}\label{sec:design-principles}
\vspace{-6pt}
In this section we combine the insights gained in~\Cref{sec:softmax-rff,sec:spectral-leakage} to formulate general design principles that are required to narrow the gap between linear and softmax attention. For this, we analyze a general form of linear attention, which encompasses both models in \Cref{eq:linatt_forget} and \Cref{eqn:rotation-gla}:
\vspace{-5pt}
\begin{equation}
\mS_t = \mS_{t-1}\mA_t + \vv_t\tilde\vk_t^{\mathsf{H}}, \quad
\vo_t = \Re\{\mS_t\tilde\vq_t\}, \quad
\vo_t = \sum_{\tau=1}^t \vv_\tau \,
  \Re\!\left\{
    \tilde\vk_\tau^{\mathsf{H}}
    \Bigl(\prod_{\kappa=\tau}^t \mA_\kappa\Bigr)
    \tilde\vq_t \right\}.
    \label{eqn:general-linear-attention}
    \vspace{-3pt}
\end{equation}
In~\Cref{sec:softmax-rff} we showed that softmax attention implicitly performs input-dependent rotations, a capability that is absent in standard linear attention. We can inject rotations into the general form in~\cref{eqn:general-linear-attention} by setting $\mA_t = \bar{\mR}_t$. Since $\bar{\mR}_t$ is a rotation matrix, this choice is stable and recovers the model in \Cref{eqn:rotation-gla}. However, a purely rotational (complex) linear recurrent model behaves like a spectral analyzer: positional information is encoded through phase, but without any decay it cannot effectively control the contribution of distant tokens and suffers from spectral leakage. To remedy this, we also need a decaying (window) component, for which we choose an exponentially decaying gate. Setting $\mA_t = \mLambda_t$ yields the gated model in \Cref{eq:linatt_forget}. In summary, a well-performing linear Transformer requires both (a) \emph{rotation} and (b) \emph{gating}.

Both components can be combined by factorizing the transition as $\mA_t = \mLambda_t \bar{\mR}_t$. Interestingly, in DeltaNet the rotational component is already present in the form of an input-dependent Householder transform, which performs rotations along the key dimension. Adding an explicit forget gate on top of this, as done by~\citet{yang-iclr25a}, improves performance in line with our design principle. For softmax Transformers, the rotational component is already provided implicitly along random axes (cf.~\Cref{sec:softmax-rff}); thus, only a forget gate is needed to fully satisfy the “rotation + decay” recipe, as empirically validated by the Forgetting Transformer~\citep{lin-iclr25a}.

In summary, as the main contribution of the paper, we introduce \method, which we define as Linear Attention with an input-dependent rotation matrix $\mR_t$ as its state transition:
\begin{equation}
    \vspace{-2pt}
    \mS_t=\mS_{t-1}\mR_t+\vv_t\vk_{t}^{\top}, \quad \vo_t=\mS_t\vq_t.
    \vspace{-2pt}
\end{equation}
Recalling the RoPE trick in \Cref{eqn:rope-trick} and defining $\mR_{i:j}=\prod_{\kappa=i}^{j}\mR_\kappa$ for the input-dependent rotation matrix $\mR_{\kappa}$, we can equivalently write this as:
\vspace{-2pt}
\begin{lightgraybox}
    \begin{equation}
         \textbf{\method:} \quad \vo_t=\sum_{\tau=1}^{t}\vv_{\tau}\Bigl\{\vk_{\tau}^{\top}\mR_{\tau+1:t}\vq_t\Bigr\}=\sum_{\tau=1}^t \vv_\tau\left\{\vk_\tau^\top \mR_{1:\tau}^\top \mR_{1:t}\vq_t\right\}.
    \end{equation}
\end{lightgraybox}
\vspace{-3pt}
Here, $\hat{\mR}_t$ should be understood as an input-dependent state transition. Although it is parameterized similar to~\Cref{eqn:complex-rff-rnn}, this does not mean that a query is being written into memory; rather, it should be seen as a selective modulation of the state transition. This can easily be applied to both queries and keys, allowing us to reuse many parts of existing RoPE kernels. Considering the extensive research done on the forget gate, we rely on the built-in forgetting functionality of the baseline architectures.

In summary, in this section we provide theoretical results that motivate the use of complex rotation and exponential decay in a linear attention model. The resulting design principle argues that both these components are required for a well-performing sequence model. This design principle also provides a fresh perspective on the success of variants of softmax Transformers~\citep{alibi, lin-iclr25a} and DeltaNet~\citep{yang-neurips24a, yang-iclr25a}, which we further elaborate on in~\Cref{app:fox} and~\Cref{app:householder-connection}.


\vspace{-6pt}
\section{Experiments}\label{sec:experiments}
\vspace{-7pt}

In the following section we test our proposed model on synthetic and real-world language modeling tasks. For this we first provide our implementation details and then explain the specific experimental setup for each task and discuss the accompanying results. We primarily apply \method to Gated Linear Attention (GLA)~\citep{yang-icml24a} and compare with other linear and softmax attention variants. We sweep learning rates (reported in~\Cref{app:exp-details}) unless otherwise specified.

\vspace{-5pt}
\subsection{Implementation}\label{sec:implementation}
\vspace{-7pt}
\begin{wrapfigure}[10]{r}{0.44\textwidth}
    \captionsetup{font=small}
    \vspace{-5pt} 
    \centering
    \begin{lightgraybox}
\begin{lstlisting}
def s_rope(q, k, W_omega, temp):
    omega = W_omega@q
    omega = conv1d(omega)
    omega = temp*cumsum(omega)
    sin_o, cos_o = sincos(omega)
    return rope(q, k, cos_o, sin_o)
\end{lstlisting}
    \end{lightgraybox}
    \vspace{-10pt}
    \caption{Pseudocode of \method.}
    \vspace{-3pt}
    \label{fig:selective-rope-pseudocode}
\end{wrapfigure}

In the implementation of \method we make several design choices that go beyond the architecture described in~\Cref{sec:design-principles}: Following~\citet{zhang-iclr24a}, where learning the random features introduced by~\citet{choromanski-iclr21a} was shown to be more effective, we make the parameters $\momega$ in \method learnable. This makes the rotations input-dependent and learnable. Following~\citet{yang-arxiv25a}, we place a sigmoid gate on the rotation angles to allow the model to control whether to rotate or not. We also add a learnable bias term, which is not dependent on relative token positions~\citep{li-iclr24a}. Finally, we place a weight norm~\citep{salimans-nips16b} on the input projection. We ablate our architectural choices on the MAD dataset and language modeling experiments.

We implement \method in PyTorch and integrate it into \texttt{flash-linear-attention}~\citep{yang2024fla} for our experiments.  Using the RoPE trick, we are able to implement our method as a prelude to RoPE where we determine the $\sin$ and $\cos$ from the input as shown in~\Cref{fig:selective-rope-pseudocode}. To optimize the throughput of our implementation, we follow the GPT-NeoX~\citep{black-arxiv22a} style of applying rotations to allow for coalesced memory access. This is equivalent to our derivations up to an index permutation.

\vspace{-10pt}
\subsection{Synthetic Language Tasks}
\vspace{-5pt}

\begin{figure}[t]
  \centering
  \begin{minipage}[t]{0.28\textwidth}
    \centering
    \vspace{-6mm} 
    \includegraphics[width=\linewidth, trim={0 7 0 0}, clip]{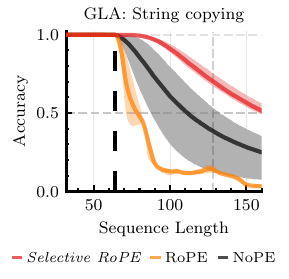}
    \captionsetup{font=small}
    \caption{Copying accuracy of GLA with CIs. Dashed line is the training sequence length.}
    \label{fig:string-copying}
  \end{minipage}
  \hfill
  \begin{minipage}[t]{0.65\textwidth}
    \centering
    \small
    \setlength{\tabcolsep}{3pt}
    \captionsetup{font=small}
    \captionof{table}{MAD benchmark results. We ablate the effectiveness of each extra component introduced to \method on GLA. The best results are marked in \textbf{bold} and the second best in \underline{underline}.}
    \label{tab:mad_results}
    \vspace{-2mm}
    \resizebox{\linewidth}{!}{%
      \begin{tabular}{@{}l|cccccc|c@{}}
        \toprule
        \textbf{Model} & \textbf{Compress} & \textbf{Fuzzy} & \textbf{In-Context} & \textbf{Memorize} & \textbf{Noisy} & \textbf{Selective} & \textbf{Average} \\
        & & \textbf{Recall} & \textbf{Recall} & & \textbf{Recall} & \textbf{Copy} & \\
        \midrule
        GLA & & & & & & & \\
        \hspace{2mm} \textit{NoPE} & 82.0 & \underline{8.5} & 87.3 & 38.7 & 87.6 & 91.1 & 65.9 \\
        \hspace{2mm} \textit{RoPE} & \underline{85.2} & 7.5 & 92.6 & \textbf{61.4} & 91.9 & \textbf{96.4} & 72.5 \\
        \cmidrule(lr){1-8}             
        \hspace{2mm} \method & \underline{85.2} & \textbf{9.0} & 94.0 & 57.1 & 91.7 & 94.9 & 72.0 \\
        \hspace{2mm} + phase gate & 85.1 & 7.5 & \textbf{96.6} & 56.9 & \underline{94.3} & 93.5 & 72.3 \\
        \hspace{2mm} + bias & 85.0 & 8.4 & 95.0 & \underline{61.3} & 91.2 & 95.4 & \underline{72.7} \\
        \hspace{2mm} + phase gate \& bias & \textbf{85.4} & 7.2 & \underline{95.9} & 60.4 & \textbf{95.0} & \underline{95.6} & \textbf{73.2} \\
        \bottomrule
      \end{tabular}%
    }
    \vspace{-4mm}
  \end{minipage}
  \vspace{-14pt}
\end{figure}

To investigate which capabilities of linear attention  are improved when using \method, we run experiments on synthetic tasks. For this, we mostly focus on recall, since it is essential for language modeling~\citep{arora-iclr24a, arora-icml24a} and a good proxy for performance at scale.
\vspace{-5pt}

\begin{wrapfigure}[8]{r}{0.34\textwidth}
\captionsetup{
        font=small,
        justification=raggedright,
        singlelinecheck=false,
        width=0.30\textwidth
    }
  \vspace{-15pt}
  \centering
  \includegraphics[width=\linewidth, trim={3 2 0 0}, clip]{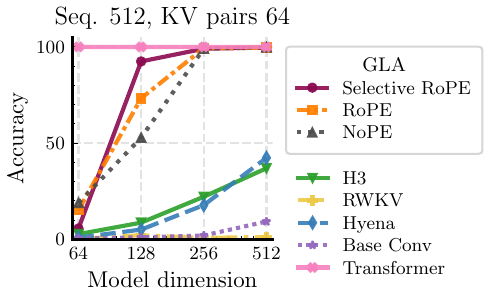}
  \vspace{-18pt}
  \caption{MQAR results.}
  \label{fig:zoology-main}
\end{wrapfigure} \paragraph{MQAR.} We evaluate GLA + \method on Multi-Query Associative Recall, following the same experimental setup as in~\citet[Figure 2]{arora-iclr24a} with a finer learning rate grid, as this has been shown to improve performance~\citep{okpekpe-arxiv25a} (cf.~\Cref{app:synthetic-tasks}). The results in~\Cref{fig:zoology-main} show that GLA improves with extra positional information and that \method achieves the greatest improvement over the base model with no positional embedding.

\paragraph{MAD and Copying.} We also evaluate our method on the MAD benchmark suite~\citep{poli-icml24a} which tests a model's ability to store and recall information within its context. Across MAD, \method improves over NoPE and matches or improves upon RoPE on several recall-oriented tasks; the phase-gated variant achieves the strongest overall average. This task differs from \textit{Selective Copy} in MAD in that the entire input sequence has to be copied token-by-token after the model is presented with a \texttt{<copy>} token. The results in~\Cref{fig:string-copying} show that \method again improves over the alternatives and learns to length extrapolate very robustly. The poor result of RoPE is reported in prior works~\citep{jelassi-icml24a, li-iclr24a} and attributable to its generally poor length extrapolation performance without fine-tuning on longer sequence lengths.

\begin{figure}[ht]
    \centering
    \vspace{-8pt}
    \includegraphics[width=0.8\linewidth, trim={3 5 3 0}, clip]{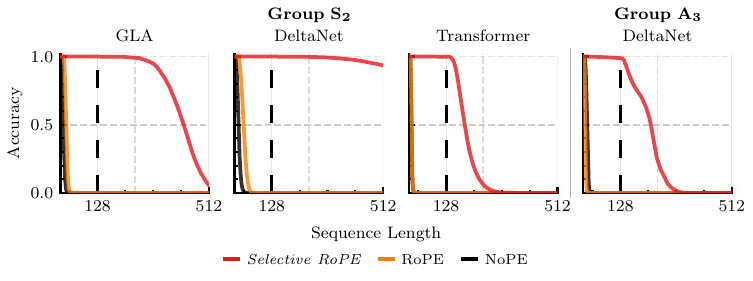}
    \vspace{-7pt}
    \caption{State tracking peformance of GLA, Transformer, and DeltaNet with different positional embeddings on $S_2$ and $A_3$. The models on $S_2$ were trained with \textit{one} layer whereas DeltaNet was trained with \textit{two} layers on $A_3$. Vertical dashed line indicates training sequence length.}
    \label{fig:state-tracking-s2}
    \vspace{-14pt}
\end{figure}

\paragraph{State Tracking.} A common way to evaluate the expressivity of a model is \textit{state tracking} on permutation composition~\citep{liu-iclr23a}. Recently, it has been shown that SSMs and linear RNNs are not capable of learning parity~\citep{merrill-icml24a}, which amounts to permutation composition on the symmetric group of two elements, $S_2$, and that one needs to extend the eigenvalue range of the state transition $\mA_t$ from $[0,1]$ to $[-1,1]$~\citep{grazzi-iclr25a}. In~\Cref{fig:state-tracking-s2} we see that GLA with \method is able to learn and length-extrapolate on $S_2$. This is in line with our expectations since the input dependent rotations allow it to model ``flips'' depending on the input either being a $0$ or a $1$, while GLA with NoPE and RoPE does not even learn the training context length.
Similarly, we can see that \method also improves the state tracking abilities in Transformers (i.e., softmax attention) allowing them to solve the parity problem up to, and slightly more, than the train sequence length. To the best of our knowledge, Transformer with \method is the only variant of Transformers capable of solving the parity task with a single layer up to this sequence length~\citep{liu-iclr23a}. We also experiment on $A_3$ with a 2-layer DeltaNet~\citep{yang-neurips24a}, which is the permutation composition on the symmetric group of three elements, limited to even permutations. As we can observe, \method improves the expressivity of the model up to a point where it is capable of solving $A_3$ up to the training sequence length. To the best of our knowledge, this is the first time these results have been presented for our choice of model on this task.

\subsection{Language Modeling}
For our language modeling experiments we train 370M and 1.3B parameter versions of GLA~\citep{yang-icml24a} and the Forgetting Transformer (FoX)~\citep{lin-iclr25a} using AdamW~\citep{loshchilov-iclr19a} and a warmup and cosine-decay schedule~\citep{loshchilov-iclr17a}. All models are trained on 10B and 26B tokens respectively. We use FineWeb~\citep{penedo-neurips24a} at a context length of 4096 and use the Mistral 7B tokenizer~\citep{jiang-arxiv23a} with a vocabulary size of $32\,000$. All remaining architectural and optimizer hyperparameters (batch size, learning
rate schedule, gradient clipping, weight decay) follow \citet{siems-arxiv25a}
and are detailed in~\Cref{app:exp-details}. To account for differences in optimal learning rates for the considered positional embedding schemes, we sweep learning rates  following~\citet{orvieto-arxiv25a}. To select the best learning rate for each model and position embedding combination, we use the perplexity on 4 million tokens not seen during training. The best models are then evaluated on downstream tasks from \texttt{lm-eval-harness}~\citep{eval-harness}, the results of which are shown in~\Cref{tab:combined-results}. We follow the default zero-shot evaluation setup in \texttt{lm-eval-harness},
using its standard prompting and report the macro-average accuracy over the core
multiple-choice tasks in the “Avg.” column. We select the same set of tasks as in GLA~\citep{yang-icml24a}.

\begin{wrapfigure}[12]{r}{0.4\textwidth}
    \vspace{-14pt}
    \centering
    \small
    \begin{minipage}{\linewidth}
        \captionof{table}{Selective RoPE ablations. GLA 370M trained on 10B tokens of FineWeb.}
        \label{tab:gla-ablations}
        \begin{tabular}{lcc}
            \toprule
            \textbf{Model} & \textbf{Avg-PPL} \(\downarrow\) & \textbf{Avg-Acc} \(\uparrow\) \\
            \midrule
            NoPE & \textbf{20.78} & \underline{46.8} \\
            RoPE & 21.42 & 46.5 \\
            \cmidrule(lr){1-3}
            \textit{Selective RoPE} & \multicolumn{2}{l}{} \\
            \hspace{0.5em}Baseline & 21.27 & 45.8 \\
            \hspace{0.5em}+ SiLU & \underline{21.14} & 46.7 \\
            \hspace{0.5em}+ phase gate & 21.16 & \textbf{47.1} \\
            \bottomrule
        \end{tabular}
    \end{minipage}
\end{wrapfigure}

Initial experiments have shown that Selective RoPE displays training instabilities at higher learning rates, earlier than models trained with RoPE or NoPE. We find that this can be remedied by using the $\ell^2$-normalized input, $\vx_t$, instead of the queries, $\vq_t$, to parametrize the rotation angles $\omega$. Further stability improvements were achieved by either placing a weight norm~\citep{salimans-nips16b} on the $\omega$ projection or using a low-rank MLP with rank 8 or 16. We also test adding a learnable bias term~\citep{li-iclr24a} but find that this does not significantly improve performance. We find that the most significant performance improvements come from adding a SiLU activation after the convolution that follows the $\omega$ projection and from adding a gate on the rotation (phase)~\cite{yang-arxiv25a} as described in~\Cref{tab:gla-ablations}.

\begin{table*}[t!]
\centering
\resizebox{\textwidth}{!}{%
\scriptsize
\addtolength{\tabcolsep}{-3.8pt}
\begin{tabular}{l|cccccccc>{\columncolor{red!8}}c}
\toprule
\textbf{Model} & \textbf{Wiki.} & \textbf{LMB.} & \textbf{LMB.} & \textbf{PIQA} & \textbf{Hella.} & \textbf{Wino.} & \textbf{ARC-e} & \textbf{ARC-c} & \textbf{Avg.} \\
 & ppl $\downarrow$ & ppl $\downarrow$ & acc $\uparrow$ & acc $\uparrow$ & acc\_n $\uparrow$ & acc $\uparrow$ & acc $\uparrow$ & acc\_n $\uparrow$ &  \\
\midrule

\textit{FoX 370M params / 10B tokens} & \multicolumn{9}{l}{} \\
\hspace{0.5em}NoPE & \underline{25.52} & 17.76 & \underline{43.0} & 67.0 & \underline{42.2} & \underline{52.8} & 47.1 & \underline{25.0} & \underline{45.3} \\
\hspace{0.5em}RoPE & \textbf{25.29} & \underline{17.71} & 42.2 & \underline{68.4} & 41.9 & 50.8 & \underline{47.7} & 24.6 & 45.1 \\
\hspace{0.5em}Selective RoPE & 33.87 & \textbf{17.29} & \textbf{43.5} & \textbf{68.7} & \textbf{42.5} & \textbf{53.0} & \textbf{48.4} & \textbf{25.7} & \textbf{46.1} \\

\cmidrule(lr){1-10}

\textit{GLA 370M params / 10B tokens} & \multicolumn{9}{l}{} \\
\hspace{0.5em}NoPE & \textbf{25.72} & \textbf{15.83} & \textbf{43.1} & 67.8 & 43.1 & \textbf{52.2} & \underline{49.4} & \underline{24.9} & \underline{46.8} \\
\hspace{0.5em}RoPE & 26.51 & \underline{16.32} & \underline{42.5} & \textbf{69.6} & \underline{43.3} & 50.7 & 48.2 & 24.5 & 46.5 \\
\hspace{0.5em}Selective RoPE & \underline{25.98} & 16.33 & 42.2 & \underline{69.1} & \textbf{43.8} & \underline{51.3} & \textbf{49.9} & \textbf{26.6} & \textbf{47.1} \\


\cmidrule(lr){1-10}

\textit{Gated DeltaNet $\sim$400M params / 10B tokens} & \multicolumn{9}{l}{} \\
\hspace{0.5em}NoPE & \textbf{25.82} & \underline{16.27} & \textbf{43.0} & 68.6 & \underline{43.3} & 52.2 & 48.4 & \underline{24.2} & 46.6 \\
\hspace{0.5em}RoPE & \underline{25.92} & 16.79 & 41.7 & \textbf{69.5} & \underline{43.3} & \underline{52.5} & \underline{49.2} & 24.1 & \underline{46.7} \\
\hspace{0.5em}Selective RoPE$^\dagger$ & 26.13 & \textbf{15.93} & \underline{42.3} & \underline{69.1} & \textbf{44.3} & \textbf{54.1} & \textbf{49.9} & \textbf{25.5} & \textbf{47.5} \\

\cmidrule(lr){1-10}

\textit{GLA 1.3B params / 26B tokens} & \multicolumn{9}{l}{} \\
\hspace{0.5em}NoPE & \underline{18.18} & \underline{8.59} & \textbf{53.8} & \textbf{73.3} & 56.5 & \textbf{58.2} & \underline{59.0} & \textbf{30.0} & \textbf{55.2} \\
\hspace{0.5em}RoPE & 18.50 & 8.88 & \underline{53.0} & 72.4 & \underline{56.6} & \underline{57.1} & 58.5 & 28.5 & 54.4 \\
\hspace{0.5em}Selective RoPE & \textbf{17.87} & \textbf{8.50} & \textbf{53.8} & \underline{73.1} & \textbf{56.9} & 56.0 & \textbf{59.3} & \underline{28.8} & \underline{54.6} \\

\bottomrule
\end{tabular}
}
\addtolength{\tabcolsep}{3.1pt}

\caption{Evaluation results on tasks from \texttt{lm-eval-harness}~\citep{eval-harness}. FoX rows correspond to the output-norm-OFF setting only. GLA blocks are kept from the original table. Gated DeltaNet rows are added from the local $\sim$400M-parameter / 10B-token sweep, selecting the best checkpoint by final validation perplexity within each positional-encoding setting. For Gated DeltaNet, the selective row ($^\dagger$) corresponds to the available \texttt{stable\_selective\_rope} variant.}
\label{tab:combined-results}
\end{table*}

The results in~\Cref{tab:combined-results} use the best configuration of Selective RoPE for both FoX and GLA at 370M and 370 and 1.3B params with 10B and 26B tokens, respectively. We find that Selective RoPE changes the trade-off between perplexity and downstream accuracy in a model-dependent way: it often improves perplexity and/or specific downstream tasks, while the macro-average accuracy may stay similar or decrease slightly depending on architecture and scale.




\section{Related Work}\label{sec:related-work}
There have been several attempts at reducing the quadratic complexity of softmax attention~\citep{dao-iclr24a}, one of which is linearization~\citep{katharopoulos-icml20a}, which results in a recurrent model with sub-quadratic cost~\citep{martin2018parallelizing, gu-neurips20a}. However, the reduced complexity comes at the cost of lower performance, especially in recall-intensive tasks~\citep{waleffe-arxiv24a, peng-iclr21a, choromanski-iclr21a, zhang-iclr24a}. This led to the development of architectures which used gating to increase their expressivity. Non-selective state-space models (SSMs) made use of input-independent gating mechanisms and vector-valued states to perform sequence modeling~\citep{orvieto-icml23a, gu-neurips22a, gu-iclr22a, sun-arxiv23a}. Later, these architectures were improved by adding selective gating~\citep{de-arxiv24a, qin-neurips23a} and matrix-valued states~\citep{gu-arxiv23a, dao-icml24a, yang-icml24a, beck-neurips24a, qin-arxiv24a}. Concurrently, DeltaNet~\citep{schlag-icml21a,yang-neurips24a} extended the notion of a gate to a state transition matrix by using an input-dependent generalized Householder matrix, which implements the error-correcting delta-rule~\citep{widroff-neurocomputing88a}. A byproduct of our theoretical analysis are further insights into the functionality of the gating mechanism and forget gate in~\Cref{sec:method}. Another line of work has improved sub-quadratic sequence models through better kernel approximations of softmax attention~\citep{katharopoulos-icml20a}. This approach led to the use of random features~\citep{choromanski-iclr21a, choromanski-iclr22a}, which was extended to learning the features directly~\citep{zhang-iclr24a}. Interestingly, a polynomial kernel inspired by the Taylor expansion of the exponential function has proved effective in closing the performance gap, while being less efficient in terms of computational complexity~\citep{zhang-iclr24a, kacham-arxiv23a}. We base our theoretical investigation on the work of~\citet{peng-iclr21a}, deriving a linear attention variant as an approximation of the softmax Transformer.

\paragraph{RoPE and complex parameterizations of RNNs.} The primary method of encoding positional information in sub-quadratic attention variants is exponential decay~\citep{lin-iclr25a}. However, in softmax Transformers, rotary position embeddings (RoPE) have proven to be very effective~\citep{su-arxiv21a, shaw-arxiv18a,yang-arxiv25a} compared to no positional embeddings (NoPE)~\citep{kazemnejad-neurips23a}. RoPE encodes positional information through point-wise rotation of the query-key pairs. Other variants of RoPE have made attempts at improving RoPE in terms of its shortcomings in generalizing to longer sequences by learning the position embedding~\citep{li-iclr24a}, framing it as a kernel design problem~\citep{chi-neurips22a}, or utilizing theoretical tools~\citep{peng-iclr24a}. Interestingly, our model generalizes RoPE by making angles input-dependent. In our experiments, we show the effectiveness of our proposed position embedding both in linear attention models and softmax Transformers. As shown in~\Cref{sec:background}, applying RoPE to a linear Transformer is equivalent to operating in the complex domain and theoretically, this is essential for the universality guarantees of RNNs and SSMs~\citep{orvieto-icml24a, gu-neurips20a}. Further investigation showed an improvement in the recall capabilities and expressivity of SSMs when operating in the complex domain~\citep{ranmilo-neurips24a}. However, later variants of these models removed the complex recurrence due to inconclusive evidence for their benefits in language modeling and implementation overhead~\citep{gu-arxiv23a,dao-icml24a,de-arxiv24a}. In this paper, we focus on the kernel view of softmax attention, providing a connection between it and linear attention models operating in the complex domain. The resulting design principle provides a connection between softmax attention, complex linear attention, the gating mechanism, and position embeddings.

\section{Conclusion}
We introduced \method, an input-dependent rotary position embedding that generalizes RoPE from fixed to arbitrary, learnable rotations. Our theory shows that (i) softmax attention admits a complex linear formulation that implicitly performs \emph{selective rotations}, and (ii)
this complex formulation introduces spectral leakage, which can be suppressed through the forget gate mechanism.  Empirically, equipping certain sequence models (namely, GLA, Gated DeltaNet, and softmax Transformers) with \method improves recall-centric synthetic tasks and strengthens language modeling downstream performance for linear attention variants. Furthermore, we show that this improvement in performance comes at very little computational cost, with an easy implementation thanks to the RoPE trick.
\paragraph{Future work.} There are several aspects of \method and the proposed design principle introduced in our paper that require further investigation. Firstly, we note that incorporating RoPE is notoriously detrimental to the length-extrapolation capabilities of sequence models~\citep{li-iclr24a}. In this paper, we do not investigate this aspect since we consider it to be out of the scope of our research. Secondly, we believe that further investigation of the effect of the extra components used in \method, namely the bias term and the phase gate, can be a fruitful direction for future research. Thirdly, we consider the impact of choosing a diagonal as opposed to a scalar forget gate to be an interesting question, since our theoretical justification for forget gates is only concerned with an exponentially decaying component in the sequence model, and not the dimensionality of it. Finally, given the existing variants of RoPE~\citep{black-arxiv22a, su-arxiv21a}, we believe it to be important to also incorporate the progress on the positional embedding front into future work.

\newpage
\section*{Acknowledgements}
We would like to thank Philipp Nazari, Julie Naegelen, Felix Sarnthein, and Gwendolyn Neitzel for constructive discussions and feedback. We would also like to thank Songlin Yang for pointing out an error in a previous version of our manuscript which overstated the expressivity of \method. This research was partially supported by the following sources:  PNRR MUR Project PE000013 CUP J53C22003010006 “Future Artificial Intelligence Research (FAIR)“, funded by the European Union – NextGenerationEU, and EU Project ELSA under grant agreement No. 101070617.
TAILOR, a project funded by EU Horizon 2020 research and innovation programme under GA No 952215; the Deutsche Forschungsgemeinschaft (DFG, German Research Foundation) under grant number 417962828; the European Research Council (ERC) Consolidator Grant 'Deep Learning 2.0' (grant no. 10). This research was partially funded by the Deutsche Forschungsgemeinschaft (DFG, German Research Foundation) under grant number 539134284, through EFRE (FEIH 2698644) and the state of Baden-W\"urttemberg. Frank Hutter, Antonio Orvieto, Sajad Movahedi, and Timur Carstensen acknowledge financial support by the Hector Foundation. The authors acknowledge support from ELLIS and ELIZA, funded by the European Union. The authors gratefully acknowledge the computing time made available to them on the high-performance computers and at the NHR Centers at TU Dresden and KIT. These centers are jointly supported by the Federal Ministry of Research, Technology and Space of Germany and the state governments participating in the
NHR. Arshia Afzal acknowledges support under project ID \#37 as part of the Swiss AI Initiative, through a grant from the ETH Domain, and computational resources provided by the Swiss National Supercomputing Centre (CSCS) under the Alps infrastructure. This work was funded by the Swiss National Science Foundation (SNSF) under grant number 2000-1-240094. Research was sponsored by the Army Research Office and was accomplished under Grant Number W911NF-24-1-0048. Views and opinions expressed are however those of the author(s) only and do not necessarily reflect those of the European Union or the ERC. Neither the European Union nor the ERC can be held responsible for them. Finally, we gratefully acknowledge the support of the Schmidt Sciences AI2050 fellowship.
\begin{figure}[h]
\begin{center}
\includegraphics[width=0.16\textwidth,trim={10 0 5 0}, clip=true]{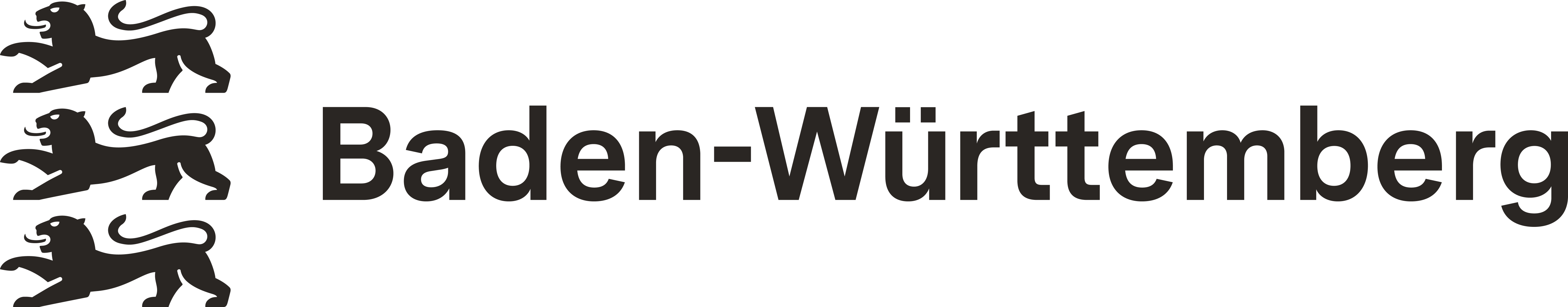} ~~~ \includegraphics[width=0.16\textwidth,trim={10 0 4 0}, clip=true]{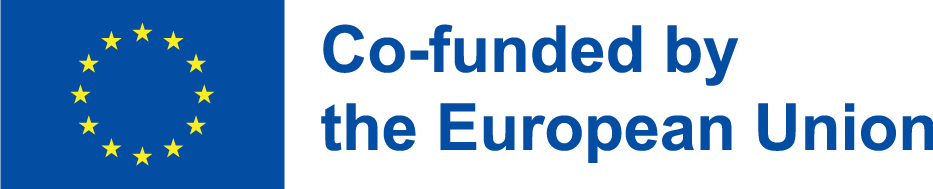}
\end{center}
\end{figure}

\newpage
\bibliographystyle{iclr2026_conference}
\bibliography{Arshia_refs, iclr2026_conference,lib,proc,strings}

\newpage
\appendix

The supplementary is structured as follows:

\Cref{app:proofs-and-derivations} contains all derivations and proofs:
\begin{itemize}
    \item \ref{app:rope-imaginary-transformer} shows that parameterizing a linear transformer with a unitary diagonal state transition can be implemented by applying RoPE to the queries and keys of the same models.
    \item \ref{app:softmax-rff-proof} shows that one can use Random Fourier Features (RFFs) to approximate the exponential kernel and thereby softmax attention and, when limiting the approximation to the $D$-dimensions, can be expressed as a recurrent model that can be implemented using an input-dependent variant of RoPE.
    \item \ref{app:rff-optimal-variance} derives the optimal variance for the RFFs used in~\Cref{app:softmax-rff-proof}.
    \item \ref{app:real-imaginary-diag-ssms} shows that complex diagonal SSMs can be understood as spectral analyzers that suffer from spectral leakage.
    A well known remedy for spectral leakage is using real-valued decaying window functions, which can also be seen as forget gates, a prevalent component in sequence models. This highlights the complementary roles of both imaginary and real parts of a gate in recurrent sequence models, with the former rotating and the latter decaying the past observation.

    \item \Cref{app:householder-connection} derives the connection between rotation using RoPE and Householder products used in DeltaNet.
\end{itemize}

\Cref{app:exp-details} lists the experimental details for language modeling and synthetic tasks and includes a code listing of the implementation of \method.

\paragraph{Notation.} We use the following notation for mathematical objects: Lower-case letters denote scalars ($\alpha, \beta$). Upper-case bold letters denote matrices ($\mW,\mA$). Lower-case bold letters denote vectors ($\vv,\vk,\vq$). $\top$ denotes the transpose operator. $\mathsf{H}$ denotes the conjugate transpose operator. $\odot$ denotes the Hadamard-product. Taking the real or imaginary component of an expression is denoted by either $\Re$ or $\Im$. Expressing a vector as a diagonal matrix is denoted by $\operatorname{diag}(\cdot)$. Block-diagonalizing a set of square matrices is denoted by $\operatorname{blockdiag}\left(\cdot\right)$. Concatenating vectors is denoted by $\vx_t = \operatorname{concat}\left(\begin{bmatrix} \cdots \end{bmatrix}^\top\right)$. By $\varphi$ we denote the argument of a complex number.

\section{Mathematical Derivations and Proofs}\label[appendix]{app:proofs-and-derivations}

\subsection{\textit{RoPE} as Imaginary-valued Linear Transformer}\label[appendix]{app:rope-imaginary-transformer}

We start by unrolling the linear Transformer’s recurrence:
 \begin{align}
 \notag \mS_t &=  \mS_{t-1}\bar{\mR} + \vv_t\tilde\vk_t^{\mathsf{H}}, \quad \vo_t =  \Re\{\mS_t\tilde\vq_t\} \\
 \notag \vo_t &= \Re \left\{ \sum_{\tau=1}^t\vv_\tau\tilde\vk_\tau^{\mathsf{H}}\bar{\mR}^{t-\tau}\tilde\vq_t \right\} = \sum_{\tau=1}^t\vv_\tau \Re \left\{  \tilde\vk_\tau^{\mathsf{H}}\bar{\mR}^{t-\tau}\tilde\vq_t \right\}
 \end{align}
Therefore, the attention score applied to value $\vv_\tau$ is:
\begin{equation*}
\text{Att}_{t,\tau} = \Re \left\{  \tilde\vk_\tau^\top\bar{\mR}^{t-\tau}\tilde\vq_t \right\}
\end{equation*}
Since $\bar{\mathbf{R}}$ is diagonal, we can expand the expression as:
\begin{align} \label{eq:rope_imag}
    \notag
    \mathbf{Att}_{t\tau} &=
    \Re\left\{ \sum_{n=1}^{d/2}
    (\tilde{\vq}^R_{t,n} + i\,\tilde{\vq}^I_{t,n}) \cdot e^{i\omega_n(t - \tau)} \cdot (\tilde{\vk}^R_{\tau,n} + i\,\tilde{\vk}^I_{\tau,n}) \right\} \\ \notag
    &
    = \Re\left\{ \sum_{n=1}^{d/2}
    |\tilde{\vq}_{t,n}|\, e^{-i \varphi(\tilde{\vq}_{t,n})}
    \cdot e^{i\omega_n(t - \tau)}
    \cdot |\tilde{\vk}_{\tau,n}|\, e^{-i \varphi(\tilde{\vk}_{\tau,n})} \right\}  \\
    &= \Re\left\{ \sum_{n=1}^{d/2}
    |\tilde{\vq}_{t,n}|\, |\tilde{\vk}_{\tau,n}|\,
    e^{i \left( \omega_n(t - \tau) - \varphi(\tilde{\vq}_{t,n}) - \varphi(\tilde{\vk}_{\tau,n}) \right)} \right\} \notag\\
    &= \sum_{n=1}^{d/2}
    |\tilde{\vq}_{t,n}|\, |\tilde{\vk}_{\tau,n}|\,
    \cos{ \left( \omega_n(t - \tau) - \varphi(\tilde{\vq}_{t,n}) - \varphi(\tilde{\vk}_{\tau,n}) \right)}
\end{align}
where $\varphi(\tilde{\vq}_{t,n})$ and $\varphi(\tilde{\vk}_{\tau,n})$ denote the complex phases (angles) of the $n$-th component of $\tilde{\vq}_t$ and $\tilde{\vk}_\tau$, respectively. \Cref{eq:rope_imag} shows that an imaginary forget gate rotates the query-key pairs at each index \( n \) with a distinct frequency \( \omega_n \). We now demonstrate that this is equivalent to applying \textit{RoPE}.
Replacing the cosine in \cref{eq:rope_imag} with its matrix multiplication equivalent:

\scalebox{0.9}{
$
\cos\!\left( \omega_n(t - \tau) - \angle \tilde{\vq}_{t,n} - \angle \tilde{\vk}_{\tau,n} \right)
=
\begin{bmatrix}
\cos(\angle \tilde{\vq}_{t,n}) \\
\sin(\angle \tilde{\vq}_{t,n})
\end{bmatrix}^\top
\begin{bmatrix}
\cos(\omega_n(t - \tau)) & -\sin(\omega_n(t - \tau)) \\
\sin(\omega_n(t - \tau)) & \cos(\omega_n(t - \tau))
\end{bmatrix}
\begin{bmatrix}
\cos(\angle \tilde{\vk}_{\tau,n}) \\
\sin(\angle \tilde{\vk}_{\tau,n})
\end{bmatrix}
$}

Plugging above in \cref{eq:rope_imag} we achieve:
\begin{align} \label{eq:ATtprooffinal}
\notag \text{Att}_{t,\tau} &=
\sum_{n=1}^{d/2}
|\tilde{\vq}_{t,n}|\, |\tilde{\vk}_{\tau,n}|\ \begin{bmatrix}
\cos(\angle \tilde{\vq}_{t,n}) \\
\sin(\angle \tilde{\vq}_{t,n})
\end{bmatrix}^\top
\begin{bmatrix}
\cos(\omega_n(t - \tau)) & -\sin(\omega_n(t - \tau)) \\
\sin(\omega_n(t - \tau)) & \cos(\omega_n(t - \tau))
\end{bmatrix}
\begin{bmatrix}
\cos(\angle \tilde{\vk}_{\tau,n}) \\
\sin(\angle \tilde{\vk}_{\tau,n})
\end{bmatrix} \\
 \notag &= \sum_{n=1}^{d/2}
  |\tilde{\vq}_{t,n}|\
\begin{bmatrix}
\cos(\angle \tilde{\vq}_{t,n}) \\
\sin(\angle \tilde{\vq}_{t,n})
\end{bmatrix}^\top
\begin{bmatrix}
\cos(\omega_n(t - \tau)) & -\sin(\omega_n(t - \tau)) \\
\sin(\omega_n(t - \tau)) & \cos(\omega_n(t - \tau))
\end{bmatrix}
 |\tilde{\vk}_{\tau,n}|\
\begin{bmatrix}
\cos(\angle \tilde{\vk}_{\tau,n}) \\
\sin(\angle \tilde{\vk}_{\tau,n})
\end{bmatrix} \\
& = \sum_{n=1}^{d/2}
\begin{bmatrix}
\tilde{\vq}^R_{t,n} \\
\tilde{\vq}^I_{t,n}
\end{bmatrix}^\top
\begin{bmatrix}
\cos(\omega_n(t - \tau)) & -\sin(\omega_n(t - \tau)) \\
\sin(\omega_n(t - \tau)) & \cos(\omega_n(t - \tau))
\end{bmatrix}
\begin{bmatrix}
\tilde{\vk}^R_{\tau,n} \\
\tilde{\vk}^I_{\tau,n}
\end{bmatrix}
\end{align}
Using the definition of:
\[
\vq_t = \bigoplus_{n=1}^{d/2} \begin{bmatrix} \tilde\vq^R_{t,n} \\ \tilde\vq^I_{t,n} \end{bmatrix},
\quad
\vk_\tau = \bigoplus_{n=1}^{d/2} \begin{bmatrix} \tilde\vk^R_{\tau,n} \\ \tilde\vk^I_{\tau,n} \end{bmatrix}.
\]
we can write \Cref{eq:ATtprooffinal} as:
\begin{equation*}
    \text{Att}_{t,\tau} = \sum_{n=1}^{d/2} \vq_{t,n} \mR^{t-\tau}_{\omega_n} \vk_{\tau,n}
\end{equation*}

which is theoretically equivalent to applying \textit{RoPE} to query-key pairs $\vq_t,\vk_\tau$. \textit{RoPE} interleaves the real and imaginary parts of complex queries and keys across the hidden dimension, then applies 2D rotations to each pair.

\subsection{Random Fourier Feature Approximation of Softmax Attention}\label[appendix]{app:softmax-rff-proof}
\par We start with the definition of softmax attention:
\begin{equation*}
    \vo_t = \frac{\vs_t}{\vz_t},\quad  \vs_t = \sum_{\tau=1}^t \exp\!\left(\tfrac{1}{\sqrt{d}}\vq_t^\top \vk_\tau\right)\cdot \vv_\tau,\quad \vz_t = \sum_{\tau=1}^t \exp\!\left(\tfrac{1}{\sqrt{d}}\vq_t^\top \vk_\tau\right),
\end{equation*}
where $\vq_t,\vk_\tau \in \R^d$.
For simplicity, we omit the normalization factor \(1/\sqrt{d}\) and first focus on the numerator of the output, specifically the exponential kernel. As in \Cref{eq:Linear_attention}, the denominator scaling can be handled separately through an external state $\vz_t$.

To approximate the exponential kernel $\exp(\cdot)$, we use Random Fourier Features (RFF) \citep{rahimi-nips07a} with frequencies $\momega \in \R^d \sim \mathcal{N}(0,\sigma^2\mI)$.
The feature map is defined as
\[
\phi_\momega(\vx) = \exp\!\left(\frac{\Vert \vx  \Vert_2^2}{2}+ i \momega^\top \vx\right),
\]
so that
\[
\exp(\vq_t^\top \vk_\tau) =
\Re\!\left\{\E_{\momega \sim \mathcal{N}(0,\sigma^2\mI)}\big[\phi_\momega(\vq_t)^\top \phi_\momega(\vk_\tau)\big]\right\},
\]
for $\sigma=1$. By applying this feature map to the linear attention formulation in \Cref{eq:Linear_attention}, we can approximate the exponential kernel in softmax attention. Continuing the approximation:
\[
\exp\!\left(\vq_t^\top \vk_\tau\right)=\exp\!\left(\frac{\Vert \vq_t\Vert_2^2 +\Vert \vk_\tau\Vert_2^2}{2}\right)\cdot
\Re\!\left\{\E_{\momega \sim \mathcal{N}(0,\mI)}\big[\exp(i\momega^\top \vq_t)\exp(-i\momega^\top \vk_\tau)\big]\right\}.
\]
Let $\momega_j\sim \mathcal{N}\left(0, \sigma^2 \mI\right)$ for $j\in \{1, 2, \dots, D\}$. Then due to the law of large numbers we have:
\begin{equation*}
    \exp\!\left(\vq_t^\top \vk_\tau\right)=\exp\!\left(\frac{\Vert \vq_t\Vert_2^2 +\Vert \vk_\tau\Vert_2^2}{2}\right)\cdot \Re\left\{\lim_{D\to \infty}\frac{1}{D} \sum_{j=1}^D  \exp\!\left(i \momega_j^\top \vq_t\right)\cdot \exp\!\left(-i\momega_j^\top \vk_\tau \right) \right\} .
\end{equation*}
Therefore, we can approximate $\exp\!\left(\vq_t^\top \vk_\tau\right)$ as the dot product of the random exponential projection of the query and the key using $D$ random $\momega_j$s:
\begin{equation*}
    \hat{\vs}_t^D =\frac{1}{D}\sum_{\tau=1}^t\sum_{j=1}^D  \exp\!\left(\frac{\Vert \vq_t\Vert_2^2 +\Vert \vk_\tau\Vert_2^2}{2}\right)\exp\!\left(i \momega_j^\top \vq_t\right)\exp\!\left(-i\momega_j^\top \vk_\tau\right)\cdot \vv_\tau.
\end{equation*}
This allows us to compute the softmax attention as the linear attention parameterized by:
\begin{align*}
    \phi({\vq}_t) &=  \exp\!\left(\frac{\Vert \vq_t\Vert_2^2}{2}\right)\cdot \exp(i \mOmega^\top \vq_t), \quad
    \phi({\vk}_\tau)  = \exp\!\left(\frac{\Vert \vk_t\Vert_2^2}{2}\right)\cdot \exp(-i\mOmega^\top \vk_\tau),
\end{align*}
with $\lim_{D\to \infty} \Re\left\{\hat{\vs}_t^D\right\}  = \sum_{\tau=1}^t \exp\!\left(\vq_t^\top \vk_\tau\right)\cdot \vv_\tau$ and $\mOmega = [\momega_1,...,\momega_D]$. Omitting the superscript $D$ for simplifying the notation, let us focus on one random feature $\momega_j$ and its contribution to the output:
\begin{equation*}
    \hat{\vs}_{t, j}=\sum_{\tau=1}^t \exp\!\left(\frac{\Vert \vq_t\Vert_2^2}{2}\right)\exp\!\left(\frac{\Vert \vk_\tau\Vert_2^2}{2}\right)\exp\!\left(i\momega_j^\top \vq_t\right)\exp\!\left(-i\momega_j^\top \vk_\tau\right) \cdot \vv_\tau.
\end{equation*}
In this case, we have $\hat{\vs}_t^D=\frac{1}{D} \hat{\mS}_t^D\mathbf{1}$, where $\hat{\mS}_t^D=\begin{bmatrix}
    \hat{\vs}_{t,1} & \hat{\vs}_{t, 2} & \dots & \hat{\vs}_{t, D}
\end{bmatrix}\in \mathbb{C}^{d\times D}$. Now note that we have:
\begin{align}
    \hat{\vs}_{t,j}
    &= \sum_{\tau=1}^{t-1}
       \exp\!\left(\tfrac{\|\vq_t\|_2^2 - \|\vq_{t-1}\|_2^2}{2}\right)
       \exp\!\left(i \momega_j^\top \vq_{t-1}\right)
       \exp\!\left(i \momega_j^\top (\vq_t - \vq_{t-1})\right)
       \exp\!\left(-i \momega_j^\top \vk_\tau\right)\cdot \vv_\tau \\
    &\quad +
       \exp\!\left(\tfrac{\|\vq_t\|_2^2}{2}\right)
       \exp\!\left(\tfrac{\|\vk_t\|_2^2}{2}\right)
       \exp\!\left(i\momega_j^\top (\vq_t-\vk_t)\right)\cdot \vv_t. \\
    &= \exp\!\left(\tfrac{\|\vq_t\|_2^2 - \|\vq_{t-1}\|_2^2}{2}\right)
       \exp\!\left(i\momega_j^\top (\vq_t-\vq_{t-1})\right)\hat{\vs}_{t-1}^j
       + \phi_{\momega_j}(\vq_t)\cdot \phi_{\momega_j}(\vk_t)\cdot \vv_t  \label{equation_s}
\end{align}
Note that the real exponential component in \Cref{equation_s} can introduce instability to the recurrence. Therefore, following the standard in both linear Transformers~\citep{yang-neurips24a,yang-icml24a,yang-iclr25a,lin-iclr25a} and deep softmax Transformers~\citep{henry-emnlp20a}, we assume $\text{L}_2$ normalization over the query and the key, i.e., $\|\vq_t\|_2=\|\vq_{t-1}\|_2$. Thus, recurrence presented in \Cref{equation_s} simplifies to:
\begin{align*}
     \hat{\vs}_{t, j} =
       \exp\!\left(i\momega_j^\top (\vq_t-\vq_{t-1})\right)\hat{\vs}_{t-1, j}
       + \phi_{\momega_j}(\vq_t)\cdot \phi_{\momega_j}(\vk_t)\cdot \vv_t,
\end{align*}
with $\hat{\vs}_{t, j}$ being the $j^{th}$ column of $\hat{\mS}_t^D$ is scaled by the values $\exp\!\left(i\momega_j^\top \left(\vq_t-\vq_{t-1}\right)\right)$. Therefore, we can write the recurrence over $\hat{\mS}_t$ as:
\begin{equation*}
    \hat{\mS}_t^D=  \hat{\mS}_{t-1}\bar\mR_t + \vv_t\left(\phi (\vq_t)\circ\phi (\vk_t)\right)^\top, \quad \hat{\vs}_t^D=\frac{1}{D} \hat{\mS}_t^D\mathbf{1}.
\end{equation*}
where $\phi(x)$ is a vector with its $j^{th}$ element equal to $\phi_{\momega_j}(x)$, and $\bar\mR_t$ is:
\begin{equation} \label{eq:hidrope}
    \bar\mR_t = \operatorname{diag}(\exp(i\mOmega^\top (\vq_t-\vq_{t-1}) ))
\end{equation}
Focusing on \Cref{eq:hidrope}, we observe that exponential kernel in softmax attention implicitly applies a form of input-dependent \textit{(Selective) RoPE} (see Sec.~\ref{sec:background}). However, instead of learning the frequencies \( \mOmega \), they are randomly sampled from a normal distribution.
\par Similarly, we can also approximate the normalizing factor $\vz_t$ as:
\begin{equation*}
    \hat{\vz}_t^D=\frac{1}{D}\sum_{\tau=1}^t\sum_{j=1}^D  \exp\!\left(\frac{\Vert \vq_t\Vert_2^2 +\Vert \vk_\tau\Vert_2^2}{2}\right)\exp\!\left(i \momega_j^\top \vq_t\right)\exp\!\left(-i\momega_j^\top \vk_\tau\right).
\end{equation*}
Separating the contribution of each random feature, we have:
\begin{equation*}
    \hat{\vz}_{t, j} = \sum_{\tau=1}^t \exp\!\left(\frac{\Vert \vq_t\Vert_2^2}{2}\right)\exp\!\left(\frac{\Vert \vk_\tau\Vert_2^2}{2}\right)\exp\!\left(i\momega_j^\top \vq_t\right)\exp\!\left(-i\momega_j^\top \vk_\tau\right).
\end{equation*}
Finally, defining $\hat{\mZ}_t^D=\begin{bmatrix}
    \hat{\vz}_{t, 1} & \hat{\vz}_{t, 2} & \hdots & \hat{\vz}_{t, D},
\end{bmatrix}$
we arrive at a similar result. The full recurrence of softmax attention, therefore, can be written as:
\begin{equation*}
    \hat{\mS}_t^D=  \hat{\mS}_{t-1}^D\bar\mR_t + \vv_t\left(\phi (\vq_t)\circ\phi (\vk_t)\right)^\top, \quad
    \hat{\mZ}_t^D  =  \hat{\mZ}_{t-1}^D\bar\mR_t +  \phi (\vq_t)\circ\phi (\vk_t) , \quad
    \hat{\vo}_t= \frac{ \hat{\mS}_t^D\mathbf{1}}{\hat{\vz}_t^D\mathbf{1}}.
\end{equation*}

which again highlights the importance of the gate $\bar\mR$ as selective rotation.

\subsection{Optimal variance for Random Fourier Features}\label[appendix]{app:rff-optimal-variance}

\begin{theorem}
    Let the expected error of the RFF kernel over $\momega_j\sim \mathcal{N}\left(0, \sigma^2\mI\right)$ be as follows:
    $\text{ERR}\left[\vq_t, \vk_\tau\right]= \mathbb{E}_{\momega_j}\left[\left(\frac{1}{D}\sum_{j=1}^D \phi_{\momega_j}(\vq_t)\cdot \phi_{\momega_j}(\vk_\tau)-\exp\!\left(\vq_t^\top\vk_\tau\right)\right)^2\right].$
    Then, for a given a pair of $\text{L}_2$ normalized query and key, the optimal value of $\sigma$ is equal to
    $\sigma=\tan\left(\frac{\arccos\left(\vq_t^\top\vk_\tau\right)}{2}\right).$
    \label{thm:optimal-var}
\end{theorem}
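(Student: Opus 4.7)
The plan is to compute $\text{ERR}(\sigma)$ in closed form using the Gaussian characteristic function and then minimize over $\sigma^2$. Setting $Z_j = \momega_j^{\top}(\vk_\tau - \vq_t)$, each RFF contribution reduces to $\phi_{\momega_j}(\vq_t)^{\mathsf{H}} \phi_{\momega_j}(\vk_\tau) = e\cdot\exp(iZ_j)$, with $Z_j \sim \mathcal{N}(0,\sigma^2\|\vq_t-\vk_\tau\|_2^2)$ under the $L_2$ normalization $\|\vq_t\|_2=\|\vk_\tau\|_2=1$. This collapses the whole analysis to moments of $\exp(iZ)$ for a scalar Gaussian, governed by $\mathbb{E}[\exp(k\,iZ)] = \exp(-k^2\sigma^2\|\vq_t-\vk_\tau\|_2^2/2)$.

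First I would expand $\text{ERR}$ as $\text{Bias}^2 + \text{Variance}/D$, where the bias is $\mathbb{E}[\hat k]-\exp(\vq_t^\top\vk_\tau)$ and the variance comes from the per-sample fluctuations of $e\cdot\Re\{\exp(iZ_j)\}$. Using the MGF, both components become functions of the single quantity $2\sigma^2\sin^2(\theta/2)$, because the law of cosines gives $\|\vq_t-\vk_\tau\|_2^2=4\sin^2(\theta/2)$ for $\theta=\arccos(\vq_t^\top\vk_\tau)$, and the target rewrites as $\exp(\cos\theta)=\exp(1-2\sin^2(\theta/2))$. This one-parameter reduction is what makes the minimization tractable.

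Next I would differentiate $\text{ERR}$ with respect to $\sigma^2$, set the derivative to zero, and introduce the auxiliary variable $u=\exp(-2\sigma^2\sin^2(\theta/2))$ to turn the stationarity condition into an algebraic relation in $u$. After taking logarithms, the half-angle identities $\sin^2(\theta/2)=(1-\cos\theta)/2$ and $\cos^2(\theta/2)=(1+\cos\theta)/2$ should collapse the angle-dependent ratio into $\tan^2(\theta/2)=(1-\cos\theta)/(1+\cos\theta)$, yielding $\sigma^2=\tan^2(\theta/2)$ and hence the stated $\sigma=\tan(\arccos(\vq_t^\top\vk_\tau)/2)$.

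The main obstacle is that the claimed optimum is independent of $D$, whereas a vanilla MSE decomposition generically gives a $D$-dependent optimum. This signals that the relevant error functional is not the textbook bias--variance tradeoff but rather a $D$-invariant sub-quantity, for instance a per-feature analogue obtained by dropping the averaging, the squared modulus of a complex-valued estimator before taking real parts, or the leading-order term in a large-$D$ expansion of $\text{ERR}$. Pinning down which of these formulations the theorem adopts, and then carrying out the corresponding trigonometric simplification so that the half-angle identities expose the $\tan^2(\theta/2)$ structure, is the delicate step; once the right one-parameter form is isolated, the remaining algebra should follow mechanically from the Gaussian MGF and half-angle substitutions.
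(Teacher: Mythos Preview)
Your plan is essentially the paper's proof. The resolution of your ``main obstacle'' is the third option you list: the paper explicitly passes to the limit $D\to\infty$ \emph{before} minimizing, so the $O(1/D)$ variance term is dropped and only the $D$-independent piece
\[
e^{2-2\sigma^{2}(1+\xi)}-2\,e^{(1-\sigma^{2})(1+\xi)},\qquad \xi=\vq_t^{\top}\vk_\tau,
\]
remains. Differentiating in $\sigma^{2}$ gives $\sigma^{2}=(1-\xi)/(1+\xi)$, and substituting $\xi=\cos\theta$ yields $\sigma=\tan(\theta/2)$ via exactly the half-angle identity you wrote; no auxiliary variable $u$ or log step is needed once the large-$D$ limit is taken.
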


\begin{proof}
We start by writing down the error:
\begin{align*}
\text{ERR}\left[\vq_t,\vk_\tau\right]
= & \frac{e^2}{D^2}\sum_{j,j'=1}  \mathbb{E}\left[\Re\left[\exp\!\left(i\left(\omega_j + \omega_{j'}\right)^\top \left(\vq_t-\vk_\tau\right)\right)\right]\right] \\& - \frac{2e}{D} \sum_{j=1}  \mathbb{E}\left[\Re\left[\exp\!\left(i\omega_j^\top \left(\vq_t-\vk_\tau\right)\right)\right]\right]\exp\!\left(\vq_t^\top \vk_\tau\right)+\text{const.}\\
= & \frac{e^2}{D}\mathbb{E}\left[\cos^2\left(i\omega^\top \left(\vq_t-\vk_\tau\right)\right)\right]+\frac{e^2\left(D^2-D\right)}{D^2}\mathbb{E}\left[\cos\!\left(i\omega^\top \left(\vq_t-\vk_\tau\right)\right)\right]^2\\
& -2e\cdot \mathbb{E}\left[\cos\!\left(i\omega^\top \left(\vq_t-\vk_\tau\right)\right)\right]\exp\!\left(\vq_t^\top \vk_\tau\right)+\text{const.},
\end{align*}
where the const. term corresponds to the terms constant w.r.t. the variance of the distribution $\sigma^2$. Plugging in the expectation of the $\cos\!(\cdot)$ and $\cos^2(\cdot)$ functions~\citep{choromanski-iclr21a}, we get the following optimization problem:
\begin{equation*}
    \min_\sigma \left[\frac{e^{2-4\sigma^2}\cdot \exp\!\left(-4\sigma^2\xi\right)}{2D}+\frac{D-1}{D}e^{2-2\sigma^2}\exp\!\left(-2\sigma^2 \xi\right)-2e^{1-\sigma^2}\exp\!\left(\left(1-\sigma^2\right)\xi\right)\right],
\end{equation*}
where for simplicity, we set $\vq_t^\top \vk_\tau=\xi\in[0, 1]$. Since in most cases, $D$ is a sizable number, we try to solve this optimization problem in the limit $D\to \infty$, which is equivalent to:
\begin{equation*}
    \min_\sigma \left[e^{2 - 2\sigma^2\left(1 + \xi\right)}-2 e^{\left(1-\sigma^2\right)\left(1+\xi\right)}\right],
\end{equation*}
with the optimal value equal to:
\begin{equation*}
    \sigma = \sqrt{\frac{1-\xi}{1+\xi}}.
\end{equation*}
Considering normalized queries and keys $||\vk_t||=||\vq_t||=1$ we can replace the $\xi=\vq_t^\top \vk_\tau$ with $\cos\!(\theta)$ therefore above also simplifies to:
\begin{equation*}
    \sigma = \sqrt{\frac{1-\cos(\theta)}{1+\cos(\theta)}}=\tan(\theta/2).
\end{equation*}
This completes our proof.\qed
\end{proof}

\subsubsection{Parameterization of the temperatures}
\label[appendix]{app:temp-param}
\par We can generalize the parameterization of our proposed temperatures vs. that of RoPE introduced by~\citet{su-arxiv21a} as follows. Let $\epsilon$ be a small enough number. Then, we have:
\begin{align*}
\textbf{RoPE:} \quad & \phi=\texttt{arange(0, D//2)}
& \Theta = &\, \epsilon^{\phi} \\
\textbf{\method:} \quad & \phi=\texttt{arange(0, D//2)} \cdot \frac{(1-\epsilon)\pi}{(D//2 - 1)}
& \Theta = &\, \tan\!\left(\phi/2\right)
\end{align*}

Here, $\epsilon$ can be seen as the inverse of the base frequency in RoPE~\citep{su-arxiv21a}, and the upper-bound on the angle between the queries and keys in our temperature scheme. A visualization of the temperature distribution in \method compared to standard \textit{RoPE} is shown in \Cref{fig:temperatures}. Our proposed variation of the temperature has an extremely similar distribution, but with a slightly faster decay to $0$.


\subsection{Role of Real and Imaginary Parts in Diagonal SSMs}\label[appendix]{app:real-imaginary-diag-ssms}
We start our analysis with non-selective diagonal SSMs and show the distinct roles of the real and imaginary components.
SSMs can be derived from continuous-time representations, expressed as\footnote{For consistency within our notation, we replace the common SSM notation for the $\mB$ and $\mC$ matrix and the input with our self-attention based notation, i.e., $\mB$ denoted as the key $\vk$, $\mC$ denoted as the query $\vq$, and the input signal $u$ denoted as the value $v$. For a detailed comparison, refer to Table 2 from~\citet{yang-neurips24a}.}:
\begin{equation}
 \frac{d\vs{(t)}}{dt}  = \mA\vs{(t)} + \vk v{(t)}, \quad
    \vo{(t)} = \vq^\top\vs{(t)}, \quad  K(t) = \vq^\top e^{\mA t} \vk, \quad \vo(t) = K(t)*v(t),
\end{equation}
where we assume the continuous value signal $v(t)$ and the continuous output signal $o(t)$ to both be scalars. Inspired by S4D~\citep{gu-neurips22a}, which is an SSM with diagonal $\mA$, we initialize the imaginary part of the state matrix as $\mA_n = i\omega_n$ ($n \in [0,N]$, roots of unity), from which the output is derived as:

\begin{equation}
    o(t)  =
    \sum_{n=1}^N \vk_n \vq_n e^{i\omega_n t} \int_{-\infty}^\infty e^{-i\omega_n\tau} v(\tau) u_t(\tau) d\tau ,\quad u_t(\tau) =
\begin{cases}  \label{eq:inini}
1, & 0 \leq \tau \leq t \\
0, & \text{o.w.}
\end{cases}
\end{equation}
where $u_t(\tau)$ is a step-window function.
The integral in~\Cref{eq:inini} is equivalent to computing the Fourier Transform of the windowed signal $v(\tau)\, u_t(\tau)$ at frequency $\omega_n$. Duality between convolution in the time domain and multiplication in the frequency domain simplifies \cref{eq:inini} to:
\begin{equation} \label{eq:sleak}
    o(t) =
    \sum_{n=1}^N \vk_n \vq_n (V_{\omega_n}*U_{t,{\omega_n}}), \quad U_{t,{2\omega}} = \frac{\sin(\omega t)}{\omega} e^{-i\omega t}
\end{equation}
with $V_{\omega_n}$ and $U_{t,\omega_n}$ denoting the Fourier transforms of $v(\tau)$ and $u_t(\tau)$, respectively.
The input spectrum $V_\omega$ is convolved with the window spectrum $U_{t,\omega}$, causing distortion, a phenomenon known as \textit{spectral leakage}. In the discrete domain, the integral in \cref{eq:inini} becomes a summation:
\begin{equation} \label{eq:ther1}
    o_t = \sum_{n=0}^N \vq_n \vk_n
    {\sum_{\tau=0}^t \exp\!\left(-\tfrac{2\pi i n \tau}{N}\right) v_\tau}.
\end{equation}
where $\omega_n = \tfrac{2\pi n i}{N}$ and $\Delta = \tfrac{1}{N}$.
Thus, S4D with a purely imaginary state matrix $\mA$ acts as a spectral analyzer: it accurately computes the $N$-point DFT of the value $v_t$ for $t \leq N$. But for $t > N$, this spectral analysis suffers from \textbf{spectral leakage} since the state size can at most represent $N$ frequencies. Therefore, the higher frequencies are being aliased or overwritten.

In Signal Processing, spectral leakage is addressed by windowing \citep{harris2005use}. In S4D, this is achieved implicitly by using a complex state matrix $\mA$ with the real part acting as a \textit{\textbf{window function}}, a classical solution to spectral leakage \citep{oppenheim1999discrete}.
Concretely, with ${\mA} = \exp(-\alpha_n \Delta + 2\pi i n \Delta)$, S4D performs a windowed DFT using a \textit{Poisson window} \citep{poassondecay}, thereby avoiding spectral leakage.
Its output can be written as:
\begin{equation} \label{eq:firsts4wind}
    o_t = \sum_{n=0}^N \vq_n \vk_n
    \sum_{\tau=0}^t \exp\!\left(-\tfrac{2\pi i n \tau}{N}\right)v_\tau
    \underbrace{\exp(-\alpha_n \Delta \tau)}_{w_\tau},
\end{equation}
where $w_\tau$ is the Poisson window and $\Delta = \tfrac{1}{N}$ is chosen for clarity in the DFT formulation.
Thus, the real part of $\mA$ in S4D acts as a window, suppressing spectral leakage and enabling undistorted spectral representations. Therefore, to summarize: \textit{the two real and imaginary parts of state transition matrix $\mA$ serve distinct but complementary roles;
\textbf{Imaginary} parts extract spectral information, while \textbf{Real} parts suppress leakage and ensure clean representation of the  spectrum.}

\subsection{Complex rotations and Householder matrices}
\label[appendix]{app:householder-connection}
\par Another approach towards introducing rotations to the queries and keys is using Householder reflection matrices~\citep{yang-neurips24a,yang-arxiv25a}. In this approach, the rotation of the query and key pair is limited to a single reflection along the direction of an input-dependent vector. Specifically, let $\vw_t$ be an input-dependent unit vector. Then, the positional information is encoded through the product of Householder reflection matrices as:
\begin{equation*}
    \vq_t^\top \mR_{t:\tau}\vk_\tau=\vq_t^\top \left(\prod_{\kappa=\tau+1}^t \left(\mI - 2 \beta_\kappa\cdot \vw_\kappa\vw_\kappa^\top\right)\right)\vk_\tau.
\end{equation*}
Therefore, the positional information between the $t^{th}$ and $\tau^{th}$ token is encoded through a rotation consisting of $t-\tau$ reflections.
\par Conveniently, we can also write the complex diagonal rotation matrix in \method in terms of the product of Householder matrices. Specifically, we can write the realification of the rotation matrix $\mathbf{R_t}$ as the product of $d$ Householder reflections, each of which performs the reflection over a single pair of adjacent elements:
\begin{equation*}
    \mR_t=\prod_{j=1}^d\left(\mI - 2\cdot \begin{bmatrix}\mathbf{0}_{j}\\ 1 \\ 0 \\\mathbf{0}_{d-j-2}\end{bmatrix}\begin{bmatrix}\mathbf{0}_j \\1 \\ 0\\ \mathbf{0}_{d-j-2}\end{bmatrix}^\top \right)\left(\mI-2\begin{bmatrix}\mathbf{0}_{j}\\\cos \left(\omega_{t, j} / 2\right) \\ \sin \left(\omega_{t, j} / 2\right)\\ \mathbf{0}_{d-j-2}\end{bmatrix}\begin{bmatrix}\mathbf{0}_{j}\\\cos \left(\omega_{t, j} / 2\right) \\ \sin \left(\omega_{t, j} / 2\right)\\ \mathbf{0}_{d-j-2}\end{bmatrix}^\top\right),
\end{equation*}
where we define $\mathbf{0}_m\in \mathbb{R}^m$ as a vector with all zeros. Assuming we split adjacent elements in the query-key into the real and imaginary components, then \method is performing two reflections over each adjacent element pair of the input, with one of them a parametric reflection, and the other negating the first element.

\par This interpretation also explains why we gain more expressivity when using \method: due to the block-diagonal structure, there is a channel mixing happening between the adjacent query-key elements. Channel mixing is a key component in improving the expressivity of sequence models~\citep{cirone-neurips24a}, thus improving the state-tracking abilities of the network~\citep{siems-arxiv25a}.

\subsection{Kernelized linear attention with fixed parameters}
\label[appendix]{app:kernelized-fixed}
In order to further contextualize our results for the RFF kernel and its relationship to other variants of kernelized linear attention, we try to replicate our mathematical derivations for two more variants of these models: namely Performer and CosFormer~\citep{performer, qin-iclr22a}. We then present the results side-by-side to provide further insights into the importance of complex parameterization in Linear attention. We provide a summary of the models in~\Cref{tab:kernel-recurrence-all}.


\begin{table}[t]
    \centering
    \small
    \renewcommand{\arraystretch}{1.15}
    \setlength{\tabcolsep}{6pt}
    \begin{tabular}{l|l}
        \toprule
        \textbf{Model} & \textbf{Transition Matrix} \\
        \midrule
        \multicolumn{2}{c}{\emph{Input-dependent}} \\
        \midrule
        Performer (PRF-based) &
        $\bar\mR_t=\operatorname{diag}\!\left(\exp\!\left(\mOmega^\top(\vq_t-\vq_{t-1})\right)\right)$ \\
        RFA (RFF-based) &
        $\bar\mR_t=\operatorname{diag}\!\left(\exp\!\left(i\,\mOmega^\top(\vq_t-\vq_{t-1})\right)\right)$ \\
        \midrule
        \multicolumn{2}{c}{\emph{Input-independent}} \\
        \midrule
        RoFormer (RoPE) &
        $\bar\mR=\operatorname{diag}\!\left(\exp(i\,\momega)\right)$ \\
        CosFormer (diagonal RoPE) &
        $\bar\mR=\exp(i\,\omega)\mI$ \\
        \bottomrule
    \end{tabular}
    \caption{
    Kernelized linear attention models under the shared recurrence
    $\mS_t = \mS_{t-1}\mA_t + \vv_t\vk_t^\top$.
    The top block lists input-dependent transitions, while the bottom block lists input-independent ones.
    Among these, the Performer transition is not norm-preserving in general.
    }
    \label{tab:kernel-recurrence-all}
\end{table}

\subsubsection{Performer}
We use the notation provided in~\Cref{app:softmax-rff-proof}. To approximate the exponential kernel $\exp\!(\cdot)$, we use Positive Random Features (PRF) \citep{rahimi-nips07a, performer} with random features $\momega \in \R^d \sim \mathcal{N}(0,\sigma^2\mI)$.
The feature map is defined as
\[
\phi_\momega(\vx) = \exp\!\left(-\frac{\Vert \vx  \Vert_2^2}{2} + \momega^\top \vx\right),
\]
so that
\[
\exp\!(\vq_t^\top \vk_\tau) =
\E_{\momega \sim \mathcal{N}(0,\sigma^2\mI)}\big[\phi_\momega(\vq_t)^\top \phi_\momega(\vk_\tau)\big],
\]
for $\sigma=1$. By applying this feature map, the linear attention formulation in \Cref{eq:Linear_attention}, we can approximate the exponential kernel in softmax attention. Continuing the approximation:
\[
\exp\!\left(\vq_t^\top \vk_\tau\right)=\exp\!\left(-\frac{\Vert \vq_t\Vert_2^2 +\Vert \vk_\tau\Vert_2^2}{2}\right)\cdot
\E_{\momega \sim \mathcal{N}(0,\mI)}\big[\exp(\momega^\top \vq_t)\exp(\momega^\top \vk_\tau)\big].
\]
Let $\momega_j\sim \mathcal{N}\left(0, \sigma^2 \mI\right)$ for $j\in \{1, 2, \dots, D\}$. Then due to the law of large numbers we have:
\begin{equation*}
    \exp\!\left(\vq_t^\top \vk_\tau\right)=\exp\!\left(-\frac{\Vert \vq_t\Vert_2^2 +\Vert \vk_\tau\Vert_2^2}{2}\right)\cdot \lim_{D\to \infty}\frac{1}{D} \sum_{j=1}^D  \exp\!\left( \momega_j^\top \vq_t\right)\cdot \exp\!\left(\momega_j^\top \vk_\tau \right).
\end{equation*}
Therefore, we can approximate $\exp\!\left(\vq_t^\top \vk_\tau\right)$ as the dot product of the random exponential projection of the query and the key using $D$ random $\momega_j$s:
\begin{equation*}
    \hat{\vs}_t^D =\frac{1}{D}\sum_{\tau=1}^t\sum_{j=1}^D  \exp\!\left(-\frac{\Vert \vq_t\Vert_2^2 +\Vert \vk_\tau\Vert_2^2}{2}\right)\exp\!\left(\momega_j^\top \vq_t\right)\exp\!\left(\momega_j^\top \vk_\tau\right)\cdot \vv_\tau.
\end{equation*}
This allows us to compute the softmax attention as the linear attention parameterized by:
\begin{align*}
    \phi({\vq}_t) &=  \exp\!\left(-\frac{\Vert \vq_t\Vert_2^2}{2}\right)\cdot \exp(\mOmega^\top \vq_t), \quad
    \phi({\vk}_\tau)  = \exp\!\left(-\frac{\Vert \vk_t\Vert_2^2}{2}\right)\cdot \exp(\mOmega^\top \vk_\tau),
\end{align*}
with $\lim_{D\to \infty} \Re\left\{\hat{\vs}_t^D\right\}  = \sum_{\tau=1}^t \exp\!\left(\vq_t^\top \vk_\tau\right)\cdot \vv_\tau$ and $\mOmega = [\momega_1,...,\momega_D]$. Omitting the superscript $D$ for simplifying the notation, let us focus on one random feature $\momega_j$ and its contribution to the output:
\begin{equation*}
    \hat{\vs}_{t, j}=\sum_{\tau=1}^t \exp\!\left(-\frac{\Vert \vq_t\Vert_2^2}{2}\right)\exp\!\left(-\frac{\Vert \vk_\tau\Vert_2^2}{2}\right)\exp\!\left(\momega_j^\top \vq_t\right)\exp\!\left(\momega_j^\top \vk_\tau\right) \cdot \vv_\tau.
\end{equation*}
In this case, we have $\hat{\vs}_t^D=\frac{1}{D} \hat{\mS}_t^D\mathbf{1}$, where $\hat{\mS}_t^D=\begin{bmatrix}
    \hat{\vs}_{t,1} & \hat{\vs}_{t, 2} & \dots & \hat{\vs}_{t, D}
\end{bmatrix}\in \mathbb{R}^{d\times D}$. Now note that we have:
\begin{align}
    \hat{\vs}_{t,j}
    &= \sum_{\tau=1}^{t-1}
       \exp\!\left(-\tfrac{\|\vq_t\|_2^2 - \|\vq_{t-1}\|_2^2}{2}\right)
       \exp\!\left(\momega_j^\top \vq_{t-1}\right)
       \exp\!\left(\momega_j^\top (\vq_t - \vq_{t-1})\right)
       \exp\!\left(\momega_j^\top \vk_\tau\right)\cdot \vv_\tau \\
    &\quad +
       \exp\!\left(-\tfrac{\|\vq_t\|_2^2}{2}\right)
       \exp\!\left(-\tfrac{\|\vk_t\|_2^2}{2}\right)
       \exp\!\left(\momega_j^\top (\vq_t-\vk_t)\right)\cdot \vv_t. \\
    &= \exp\!\left(-\tfrac{\|\vq_t\|_2^2 - \|\vq_{t-1}\|_2^2}{2}\right)
       \exp\!\left(\momega_j^\top (\vq_t-\vq_{t-1})\right)\hat{\vs}_{t-1}^j
       + \phi_{\momega_j}(\vq_t)\cdot \phi_{\momega_j}(\vk_t)\cdot \vv_t  \label{equation_s_prf}
\end{align}
Note that the real exponential component in \Cref{equation_s_prf} can introduce instability to the recurrence. Therefore, following the standard in both linear Transformers~\citep{yang-neurips24a,yang-icml24a,yang-iclr25a,lin-iclr25a} and deep softmax Transformers~\citep{henry-emnlp20a}, we assume $\text{L}_2$ normalization over the query and the key, i.e., $\|\vq_t\|_2=\|\vq_{t-1}\|_2$. Thus, recurrence presented in \Cref{equation_s_prf} simplifies to:
\begin{align*}
     \hat{\vs}_{t, j} =
       \exp\!\left(\momega_j^\top (\vq_t-\vq_{t-1})\right)\hat{\vs}_{t-1, j}
       + \phi_{\momega_j}(\vq_t)\cdot \phi_{\momega_j}(\vk_t)\cdot \vv_t,
\end{align*}
with $\hat{\vs}_{t, j}$ being the $j^{th}$ column of $\hat{\mS}_t^D$ is scaled by the values $\exp\!\left(\momega_j^\top \left(\vq_t-\vq_{t-1}\right)\right)$. Therefore, we can write the recurrence over $\hat{\mS}_t$ as:
\begin{equation*}
    \hat{\mS}_t^D=  \hat{\mS}_{t-1}\bar\mR_t + \vv_t\left(\phi (\vq_t)\circ\phi (\vk_t)\right)^\top, \quad \hat{\vs}_t^D=\frac{1}{D} \hat{\mS}_t^D\mathbf{1}.
\end{equation*}
where $\phi(x)$ is a vector with its $j^{th}$ element equal to $\phi_{\momega_j}(x)$, and $\bar\mR_t$ is:
\begin{equation} \label{eq:hidrope_prf}
    \bar\mR_t = \operatorname{diag}(\exp(\mOmega^\top (\vq_t-\vq_{t-1}) ))
\end{equation}
Focusing on \Cref{eq:hidrope_prf}, we observe that the PRF approximation no-longer results in an input-depedent rotation matrix, but rather a more general form of a real-valued input-dependent state transition matrix. However, the features \( \mOmega \) are randomly sampled from a normal distribution.
\par Similarly, we can also approximate the normalizing factor $\vz_t$ as:
\begin{equation*}
    \hat{\vz}_t^D=\frac{1}{D}\sum_{\tau=1}^t\sum_{j=1}^D  \exp\!\left(-\frac{\Vert \vq_t\Vert_2^2 +\Vert \vk_\tau\Vert_2^2}{2}\right)\exp\!\left(\momega_j^\top \vq_t\right)\exp\!\left(\momega_j^\top \vk_\tau\right).
\end{equation*}
Separating the contribution of each random feature, we have:
\begin{equation*}
    \hat{\vz}_{t, j} = \sum_{\tau=1}^t \exp\!\left(-\frac{\Vert \vq_t\Vert_2^2}{2}\right)\exp\!\left(-\frac{\Vert \vk_\tau\Vert_2^2}{2}\right)\exp\!\left(\momega_j^\top \vq_t\right)\exp\!\left(\momega_j^\top \vk_\tau\right).
\end{equation*}
Finally, defining $\hat{\mZ}_t^D=\begin{bmatrix}
    \hat{\vz}_{t, 1} & \hat{\vz}_{t, 2} & \hdots & \hat{\vz}_{t, D},
\end{bmatrix}$
we arrive at a similar result. The full recurrence of softmax attention, therefore, can be written as:
\begin{equation*}
    \hat{\mS}_t^D=  \hat{\mS}_{t-1}^D\bar\mR_t + \vv_t\left(\phi (\vq_t)\circ\phi (\vk_t)\right)^\top, \quad
    \hat{\mZ}_t^D  =  \hat{\mZ}_{t-1}^D\bar\mR_t +  \phi (\vq_t)\circ\phi (\vk_t) , \quad
    \hat{\vo}_t= \frac{ \hat{\mS}_t^D\mathbf{1}}{\hat{\vz}_t^D\mathbf{1}}.
\end{equation*}

Interestingly, unlike the results provided in~\Cref{app:rff-optimal-variance}, in this case we won't be able to rely on a spectrum of choices for $\sigma$ in order to limit the approximation error for a finite number of random feature samples. More concretely, we provide the following theorem to highlight this issue:

\begin{theorem}
    Let the expected error of the PRF kernel over $\momega_j\sim \mathcal{N}\left(0, \sigma^2\mI\right)$ be as follows:
    $\text{ERR}\left[\vq_t, \vk_\tau\right]= \mathbb{E}_{\momega_j}\left[\left(\frac{1}{D}\sum_{j=1}^D \phi_{\momega_j}(\vq_t)\cdot \phi_{\momega_j}(\vk_\tau)-\exp\!\left(\vq_t^\top\vk_\tau\right)\right)^2\right].$
    Then, for a given a pair of $\text{L}_2$ normalized query and key, the optimal value of $\sigma$ is equal to
    $\sigma=1$.
    \label{thm:optimal-var-prf}
\end{theorem}

\begin{proof}
We start by writing down the error:
\begin{align*}
\text{ERR}\left[\vq_t,\vk_\tau\right]
= & \frac{e^{-2}}{D^2}\sum_{j,j'=1}  \mathbb{E}\left[\exp\!\left(\left(\omega_j + \omega_{j'}\right)^\top \left(\vq_t-\vk_\tau\right)\right)\right] \\& - \frac{2e^{-1}}{D} \sum_{j=1}  \mathbb{E}\left[\exp\!\left(\omega_j^\top \left(\vq_t-\vk_\tau\right)\right)\right]\exp\!\left(\vq_t^\top \vk_\tau\right)+\text{const.}\\
= & \frac{e^{-2}}{D}\mathbb{E}\left[\exp\!\left(2\omega^\top \left(\vq_t+\vk_\tau\right)\right)\right]+\frac{e^{-2}\left(D^2-D\right)}{D^2}\mathbb{E}\left[\exp\!\left(\omega^\top \left(\vq_t+\vk_\tau\right)\right)\right]^2\\
& -2e^{-1}\cdot \mathbb{E}\left[\exp\!\left(\omega^\top \left(\vq_t+\vk_\tau\right)\right)\right]\exp\!\left(\vq_t^\top \vk_\tau\right)+\text{const.},
\end{align*}
where the const. term corresponds to the terms constant w.r.t. the variance of the distribution $\sigma^2$. Plugging in the expectation of the $\exp\!(\cdot)$ function~\citep{choromanski-iclr21a}, we get the following optimization problem:
\begin{equation*}
    \min_\sigma \left[\frac{e^{-2}}{D}\exp\!\left(4\sigma^2\left(1+\xi\right)\right)+\frac{e^{-2}\left(D-1\right)}{D}\exp\!\left(2\sigma^2\left(1+\xi\right)\right)-2e^{-1}\exp\!\left(\sigma^2\left(1+\xi\right)+\xi\right)\right],
\end{equation*}
where for simplicity, we set $\vq_t^\top \vk_\tau=\xi\in[0, 1]$. Since in most cases, $D$ is a sizable number, we try to solve this optimization problem in the limit $D\to \infty$, which is equivalent to:
\begin{equation*}
    \min_\sigma \left[e^{2\sigma^2\left(1+\xi\right)-2}-2 e^{\sigma^2\left(1+\xi\right)+\xi-1}\right],
\end{equation*}
with the optimal value equal to $\sigma = 1$.

This completes our proof.\qed
\end{proof}

Following~\Cref{thm:optimal-var-prf}, we observe that for a fixed pair of queries and keys, the optimal variance is independent of the angle between the two vectors. This is in stark contrast of the RFF-related results in~\Cref{thm:optimal-var}, where the variance of the random feature is a function of the angle between the query and key vectors.

At first glance, this may seem as a good incentive to instead adopt the transition matrix introduced in~\Cref{eq:hidrope_prf} for the recurrence. However, one also need to consider that there is no stable generalization of a recurrence based on this transition matrix, as there won't be any guarantees about the matrix being contractive or orthogonal. On the other hand, the rotation matrix in~\Cref{eq:hidrope} will always be stable, as it corresponds to an orthogonal matrix. This makes RFFs a much more suitable choice for a random features-based approximation of the softmax attention function.

\subsubsection{CosFormer}

We start by defining the output in the CosFormer model~\citep{qin-iclr22a}:

\begin{equation} \label{eq:cosformer}
\vo_t = \frac{\mS_t \vq_t}{\vz_t^\top \vq_t}, \quad
\mS_t = \sum_{\tau=1}^t \cos\!\left(\frac{\pi \left(t-\tau\right)}{2M}\right)\cdot\vv_\tau \vk_\tau^\top, \quad
\vz_t = \sum_{\tau=1}^t \cos\!\left(\frac{\pi \left(t-\tau\right)}{2M}\right) \cdot \vk_\tau,
\end{equation}

where $M$ corresponds to a pre-defined, large scalar value. First, we focus on the state $\mS_t$ in~\Cref{eq:cosformer}, which we can write as:

\begin{align*}
    \mS_t = & \cos\!\left(\frac{\pi}{2M}\right)\cdot \sum_{\tau=1}^{t-1} \cos\!\left(\frac{(t-1-\tau)\pi}{2M}\right)\cdot \vv_\tau \vk_\tau^\top \\
    & -\sin\!\left(\frac{(t-1-\tau)\pi}{2M}\right)\cdot \sum_{\tau=1}^{t-1} \sin\!\left(\frac{(t-1-\tau)\pi}{2M}\right)\cdot \vv_\tau \vk_\tau^\top + \vv_t\vk_t^\top\\
    = & \cos\!\left(\frac{\pi}{2M}\right)\cdot \mS_{t-1} - \sin\!\left(\frac{\pi}{2M}\right)\cdot \mS_{t-1}^{\sin} + \vv_t \vk_t^\top,
\end{align*}
where we define:
\begin{align*}
    \mS_t^{\sin} = & \sum_{\tau=1}^t \sin\!\left(\frac{(t-\tau)\pi}{2M}\right)\cdot \vv_\tau\vk_\tau^\top\\
    = & \sin\!\left(\frac{\pi}{2M}\right)\cdot \sum_{\tau=1}^{t-1} \cos\!\left(\frac{(t-1-\tau)\pi}{2M}\right)\cdot \vv_\tau \vk_\tau^\top +\cos\!\left(\frac{\pi}{2M}\right)\cdot \sum_{\tau=1}^{t-1}\sin\!\left(\frac{(t-1-\tau)\pi}{2M}\right)\cdot \vv_\tau\vk_\tau^\top \\
    = & \sin\!\left(\frac{\pi}{2M}\right)\mS_{t-1} + \cos\!\left(\frac{\pi}{2M}\right)\mS_{t-1}^{\sin}.
\end{align*}
Now, let us define:
\begin{equation*}
    \tilde\mS_t = \begin{bmatrix}
        \mS_t&
        \mS_t^{\sin}
    \end{bmatrix},
\end{equation*}
for which we can write the following recurrence:
\begin{equation*}
    \tilde\mS_t = \tilde \mS_{t-1}\begin{bmatrix}
        \cos\!\left(\frac{\pi}{2M}\right)\mI & \sin\!\left(\frac{\pi}{2M}\right)\mI\\
        -\sin\!\left(\frac{\pi}{2M}\right)\mI & \cos\!\left(\frac{\pi}{2M}\right)\mI
    \end{bmatrix} + \begin{bmatrix}\vv_t\vk_t^\top&\mathbf{0}\end{bmatrix}.
\end{equation*}

This recurrence is equivalent to a complex-domain recurrence where $\mS_t$ corresponds to the real component of the state and $\mS_t^{\sin}$ to the imaginary component:

\begin{equation}
    \tilde{\mS}_t=  \tilde{\mS}_{t-1}\bar\mR + \vv_t \vk_t^\top,\quad \bar\mR = \exp\!\left(i\frac{\pi}{2M}\right)\mI.
    \label{eqn:complex-cosformer-rnn}
\end{equation}

Similarly, we can define the complex-domain normalization factor $\tilde\vz_t$ as:

\begin{equation*}
    \tilde\vz_t = \begin{bmatrix}
        \vz_t & \vz_t^{\sin}
    \end{bmatrix},
\end{equation*}

where we define $\vz_t^{\sin}$ as:

\begin{equation*}
    \vz_t^{\sin} = \sin\!\left(\frac{\pi}{2M}\right)\vz_{t-1} + \cos\!\left(\frac{\pi}{2M}\right)\vz_{t-1}^{\sin}.
\end{equation*}

This recurrence is also equivalent to a complex-domain recurrence of the following form:

\begin{equation*}
    \tilde\vz_t=\tilde \vz_{t-1} \bar \mR + \vk_t, \quad \mR = \exp\!\left(i\frac{\pi}{2M}\right)\mI.
\end{equation*}

The model introduced in~\Cref{eq:cosformer} is a complex-domain linear attention model, in which the rotation matrix is defined as a diagonal matrix with the same values on the diagonal (see~\Cref{eqn:complex-cosformer-rnn}). Note that as demonstrated by~\citet{qin-iclr22a}, the rope trick can be applied to further simplify the computation.

\subsection{Relationship between \method, ALiBi, and FoX} \label[appendix]{app:fox}

The use of exponential decay in softmax Transformer has been common practice for a while~\citep{alibi, jelassi-icml24a, lin-iclr25a}, proving advantageous for length generalization and position encoding. In this section, we investigate ALiBi and FoX as two examples of such models and provide context for their success based on the findings of this paper.

\subsubsection{Attention with Linear Biases (ALiBi)}
ALiBi~\citep{alibi} was originally introduced as a position encoding technique aiming at improving the length generalization of softmax Transformers. Borrowing the notation from~\Cref{eq:softmax_attention}, we write the attention mechanism in ALiBi as:
\begin{align*}
    \vs_t = & \sum_{\tau=1}^t \exp\!\left(\tfrac{1}{\sqrt{d}}\vq_t^\top \vk_\tau+\sum_{\kappa=\tau+1}^t \log\mA\right)\vv_\tau, & \vz_t = & \sum_{\tau=1}^t \exp\!\left(\tfrac{1}{\sqrt{d}}\vq_t^\top \vk_\tau+\sum_{\kappa=\tau+1}^t \log\mA\right),
\end{align*}
where $\log\mA <0$ can be a scalar or a diagonal matrix. Note that we can re-write $\vs_t$ and $\vz_t$ by moving the input-independent summation outside of the exponent:
\begin{align*}
    \vs_t = & \sum_{\tau=1}^t \exp\!\left(\tfrac{1}{\sqrt{d}}\vq_t^\top \vk_\tau\right)\cdot\mA^{t-\tau} \vv_\tau, & \vz_t = & \sum_{\tau=1}^t \exp\!\left(\tfrac{1}{\sqrt{d}}\vq_t^\top \vk_\tau\right)\cdot\mA^{t-\tau},
\end{align*}
which can be interpreted as a Transformer layer with exponential decay.
\subsubsection{Forgetting Transformers (FoX)}
FoX~\citep{lin-iclr25a} is a softmax Transformer that aims to augment the attention mechanism with a forget gate, inspired by the success of the exponentially decaying gating mechanisms in linear RNNs and linear Transformers~\citep{yang-icml24a, gu-arxiv23a}. Borrowing the notation from~\Cref{eq:softmax_attention}, we write the attention mechanism in FoX as:
\begin{align*}
    \vs_t = & \sum_{\tau=1}^t \exp\!\left(\tfrac{1}{\sqrt{d}}\vq_t^\top \vk_\tau+\sum_{\kappa=\tau+1}^t \log\mA_\kappa\right)\vv_\tau, & \vz_t = & \sum_{\tau=1}^t \exp\!\left(\tfrac{1}{\sqrt{d}}\vq_t^\top \vk_\tau+\sum_{\kappa=\tau+1}^t \log\mA_\kappa\right),
\end{align*}
where $\log\mA_\kappa<0$ can be a scalar or a diagonal matrix. In this case, $\mA_\kappa$ is an input dependent factor and a function of the $\kappa^{th}$ token. We can re-write $\vs_t$ and $\vz_t$ by moving the input-dependent summation outside of the exponent:
\begin{align*}
    \vs_t = & \sum_{\tau=1}^t \exp\!\left(\tfrac{1}{\sqrt{d}}\vq_t^\top \vk_\tau\right)\cdot \prod_{\kappa=\tau+1}^t\!\mA_\kappa \vv_\tau, & \vz_t = & \sum_{\tau=1}^t \exp\!\left(\tfrac{1}{\sqrt{d}}\vq_t^\top \vk_\tau\right)\cdot \prod_{\kappa=\tau+1}^t\!\mA_\kappa,
\end{align*}
which can be interpreted as a Transformer layer with selective exponential decay.
\subsubsection{The complementary role of rotation and decay}
The success of models belonging to the family of softmax Transformers with forget gates, namely ALiBi and FoX, demonstrate that the exponential decaying factor is a missing component in these models. This observation aligns with our proposed unifying framework in~\Cref{sec:method}. Interestingly, in the softmax setting, \method closely parallels FoX: it can be seen as endowing the decay term $a_t$ with a rotation component $\mR_t$.

\section{Experimental Details}\label[appendix]{app:exp-details}
In this section we provide additional details on our experimental setup for the tasks considered in the paper.

\subsection{Language Modeling}
We use PlainLM~\citep{ajroldi2024plainlm} together with an adapted version of \texttt{flash-linear-attention} for all of our language model trainings. We train on $>80$GB VRAM GPUs including NVIDIA A100, H100 and B200. One model training (370M parameters, 10B tokens) is performed on a single node with 4 to 8 of such GPUs and takes anywhere from 48 hours (on 4 A100) to 9 hours on 8 B200. We use Distributed Data Parallel (DDP) for multi-GPU training.

\begin{table}[ht]
\centering
\caption{Optimizer and learning-rate schedule hyperparameters for language modeling.}
\resizebox{0.7\linewidth}{!}{
\begin{tabular}{@{}lll@{}}
\toprule
\multicolumn{3}{c}{\textbf{Optimizer}} \\
\midrule
Parameter & Symbol & Value \\
\midrule
Base learning rate (candidates) & $\eta$ & \texttt{[5e-4, 1e-3, 2e-3, 4e-3, 8e-3, 1.6e-2]} \\
Adam $\beta_1$ & $\beta_1$ & 0.9 \\
Adam $\beta_2$ & $\beta_2$ & 0.95 \\
Weight decay & $\lambda$ & 0.1 \\
Numerical epsilon & $\epsilon$ & $1\times10^{-8}$ \\
Gradient clipping (global norm) & $\mathrm{clip}_{\ell_2}$ & 1.0 \\
\midrule
\multicolumn{3}{c}{\textbf{LR Schedule / Training Horizon}} \\
\midrule
LR start (schedule) & $\eta_{\mathrm{start}}$ & 1e-5 \\
LR end (schedule) & $\eta_{\mathrm{end}}$ & 1e-4 \\
Warmup (fraction of steps) & -- & 0.1 \\
Total optimizer steps & $T$ & 66{,}758 \\
\bottomrule
\end{tabular}
}
\end{table}

\subsection{Synthetic Tasks}\label[appendix]{app:synthetic-tasks}
\subsubsection{MAD}
For MAD, we take the implementation from \href{https://github.com/athms/mad-lab}{\texttt{mad\_lab}} and implement \method in GLA. We follow the exact experimental setup outlined in the paper~\citep{poli-icml24a} and run all variations of task difficulty and optimizer hyperparameters which results in 66 task settings $\times$ 6 optimizer settings $= 396$ trained models per considered setting (i.e., GLA with \method, RoPE or NoPE). We provide the logs from the experiments in our supplementary.

\subsubsection{State Tracking}
For state tracking we adopt the exact experimental setup as described in DeltaProduct~\citep{siems-arxiv25a} and~\cite{grazzi-iclr25a}.

\begin{table}[h]
\centering
\caption{Training state tracking configuration.}
\resizebox{0.5\linewidth}{!}{
\begin{tabular}{@{}ll@{}}
\toprule
\multicolumn{2}{c}{\textbf{Training Loop}} \\
\midrule
Parameter & Value \\
\midrule
Epochs & 100 \\
Batch size & 4096 \\
\midrule
\multicolumn{2}{c}{\textbf{Optimization}} \\
\midrule
Learning rate & 1e-3 \\
$\beta_1$ & 0.9 \\
$\beta_2$ & 0.999 \\
Optimizer $\epsilon$ & 1e-8 \\
Weight decay & 1e-6 \\
LR scheduler & cosine \\
\midrule
\multicolumn{2}{c}{\textbf{Precision / Compile}} \\
\midrule
Mixed precision & true \\
DType & bfloat16 \\
\midrule
\multicolumn{2}{c}{\textbf{Data}} \\
\midrule
Train set size & 2{,}000{,}000 sequences \\
Train sequence length & 128 tokens \\
Eval set size & 500{,}000 sequences \\
Eval sequence length & 512 tokens \\
\midrule
\multicolumn{2}{c}{\textbf{Seeds \& Eval}} \\
\midrule
Seeds & \texttt{[555, 666, 777, 888, 999]} \\
Eval batch size & 128 \\
\bottomrule
\end{tabular}
}
\end{table}

\subsubsection{MQAR}
We have carefully followed the training recipe of \cite{arora-iclr24a} for all models including: GLA \citep{yang-icml24a}, DeltaNet \citep{yang-neurips24a}, Mamba2 \citep{dao-icml24a} and Transformer++ \citep{touvron-arxiv23a}. The learning rate for all models was swept within the range of $[0.0001,0.01]$ for 8 different values per each model ranging uniformly from 0.01 to 0.001. All other configuration and the model dimensions were remained the same as original reference \cite{arora-iclr24a}.

\subsubsection{Copying}
\begin{table}[h]
\centering
\caption{Optimizer and Data parameters for Copying}
\resizebox{0.35\linewidth}{!}{
\begin{tabular}{@{}ll@{}}
\toprule
\multicolumn{2}{c}{\textbf{Optimizer}} \\
\midrule
Learning rate & 5.0e-5 \\
Weight decay & 0.1 \\
$\beta_1$ & 0.9 \\
$\beta_2$ & 0.999 \\
Optimizer $\epsilon$ & 1.0e-8 \\
Gradient clipping (global norm) & 1.0 \\
\midrule
\multicolumn{2}{c}{\textbf{Scheduler}} \\
\midrule
Scheduler & linear \\
Warmup (fraction of steps) & 0.1 \\
\midrule
\multicolumn{2}{c}{\textbf{Seeds \& Eval}} \\
\midrule
Seed & 42 \\
Eval batch size & 256 \\
\midrule
\multicolumn{2}{c}{\textbf{Data}} \\
\midrule
Vocab size & 26 \\
$n$-gram & 0 \\
Answer length & 0 \\
Train task & \texttt{copy} \\
Eval task & \texttt{copy} \\
Sequence length & 420 \\
Min length (train) & 2 \\
Max length (train) & 64 \\
Min length (eval) & 2 \\
Max length (eval) & 512 \\
Sampler type & sequential \\
Sampler seed & null \\
\bottomrule
\end{tabular}
}
\end{table}

\section*{The Use of Large Language Models (LLMs)}
While preparing this manuscript, we used Large Language Models (LLMs) to a limited extent. Their role was restricted to assisting with editing and polishing the writing, such as improving clarity, grammar, and flow. All conceptual ideas, methods, experiments, and analyses presented in this paper are entirely the work of the authors. No ideas, algorithms, or research contributions were generated by an LLM. The LLM served only as a tool to refine the presentation of the text without influencing the substance of the research.

\end{document}